\newcommand{\modelname}{\textsc{LiveNet}\xspace}
\setlist[itemize,enumerate]{left=0.05cm}
\title[LiveNet]{\modelname: Robust, Minimally Invasive Multi-Robot Control for Safe and Live Navigation in Constrained Environments\\
{\small (Code and Videos at  \href{livenet-uva.github.io}{\textbf{livenet-uva.github.io}})}}
\author{%
 \Name{Srikar Gouru} \Email{mcj2vb@virginia.edu}\\
 \addr University of Virginia
 \AND
 \Name{Siddharth Lakkoju} \Email{qmn4cj@virginia.edu}\\
 \addr University of Virginia%
 \AND
 \Name{Rohan Chandra} \Email{rohanchandra@virginia.edu}\\
 \addr University of Virginia
}
\begin{document}

\maketitle


\begin{abstract}
Robots in densely populated real-world environments frequently encounter constrained and cluttered situations such as passing through narrow doorways, hallways, and corridor intersections, where conflicts over limited space result in collisions or deadlocks among the robots. Current decentralized state-of-the-art optimization- and neural network-based approaches $(i)$ are predominantly designed for general open spaces, and $(ii)$ are overly conservative, either guaranteeing safety, or liveness, but not both. While some solutions rely on centralized conflict resolution, their highly invasive trajectories make them impractical for real-world deployment. This paper introduces \modelname, a fully decentralized and robust neural network controller that enables human-like yielding and passing, resulting in agile, non-conservative, deadlock-free, and safe, navigation in congested, conflict-prone spaces. \modelname is minimally invasive, without requiring inter-agent communication or cooperative behavior. The key insight behind \modelname is a unified CBF formulation for \textit{simultaneous} safety and liveness, which we integrate within a neural network for robustness. We evaluated \modelname in simulation and found that general multi-robot optimization- and learning-based navigation methods fail to even reach the goal, and while methods designed specially for such environments do succeed, they are $10$--$20\times$ slower, $4$--$5\times$ more invasive, and much less robust to variations in the scenario configuration such as changes in the start states and goal states, among others. We open-source the \modelname code at \href{https://github.com/srikarg89/LiveNet}{\textbf{https://github.com/srikarg89/LiveNet}}.

\end{abstract}

\begin{keywords}
Multi-Robot Navigation, Liveness, Safety, Constrained Environments.
\end{keywords}

\section{Introduction}
\label{sec: introduction}

Large-scale multi-agent robot navigation has recently gained popularity for its applications in many fields, including warehouse robots, autonomous vehicles~(\cite{chandra2022gameplan, chandra2024towards}), unmanned aerial vehicles, and more (\cite{bogue2024role, rasheed2022review, raj2024rethinking, francis2023principles}). These systems frequently operate in constrained and cluttered environments, such as navigating doorways, intersections, or narrow hallways. Such scenarios often lead to conflicts, manifesting as deadlocks, collisions, or both, when multiple agents attempt to occupy the same limited space simultaneously (\cite{chandra2024deadlock}). In contrast, humans navigate these challenges effortlessly and intuitively, demonstrating agility and safety by dynamically modulating their speed and trajectory. This allows them to avoid collisions (ensuring safety) and prevent deadlocks (maintaining liveness—defined as the ability to continually make progress toward their goal) while ensuring smooth and efficient transitions. This work investigates a fundamental research question: \textit{How can robots emulate human-like agility, safety, and liveness in constrained environments?}
\begin{wrapfigure}{r}{0.6\linewidth}
    \centering
    \includegraphics[width=\linewidth]{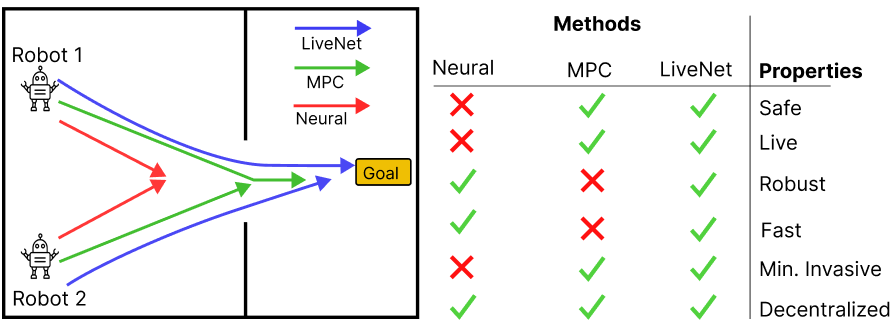}
    \caption{\modelname enables minimally invasive, robust, safe and deadlock-free navigation in constrained environments compared to existing methods.}
    \label{fig: hlm}
\end{wrapfigure}
We approach this question by designing a low-level controller capable of \textit{simultaneously} ensuring safety and liveness. Achieving only one of these properties is insufficient for replicating human-like behavior, such as agile yielding. Focusing solely on safety results in overly conservative behavior, while prioritizing liveness alone can lead to aggressive and potentially unsafe navigation. While some efforts have attempted to integrate both safety and liveness~(\cite{chandra2024deadlock, wang2017safety, zhou2017fast, chen2023multi, garg2024foundationmodelsrescuedeadlock, zinage2024decentralized}), these solutions often perturb the robots in an invasive manner that forces them to adopt suboptimal trajectories. For example,~(\cite{zhou2017fast, wang2017safety}) implement the right-hand-rule to induce clockwise movement in the event of a deadlock. Furthermore, in prioritizing safety, current methods only resolve deadlocks after they occur and all agents come to a stop. This is in stark contrast to how humans navigate by preemptively detecting and preventing the deadlock, without invasive maneuvers or delays. These issues highlight a critical gap in the literature towards achieving a robust, minimally invasive liveness without sacrificing safety.
An optimal algorithm to navigating in constrained and cluttered spaces, therefore, must satisfy the following criteria:

\begin{enumerate}[noitemsep]
    \item \textit{Decentralized / Non-cooperative}: The algorithm should not assume that agents can communicate with one another or with a centralized controller. The robots should only gain information about their surrounding environment through sensory perception.
    \item \textit{Robust}: Agents must be able to changes in the environment or in their own configurations.
    \item \textit{Safe}: The agents should maintain a predefined safety distance from other agents and obstacles.
    \item \textit{Live}: The agents should avoid deadlocks and continuously make progress towards the goal.
    \item \textit{Minimally invasive}: An optimal strategy should apply the least disruptive intervention using minimal perturbations to ensure conflict resolution.
    \item \textit{Dynamically feasible}: The agents should follow predefined non-holonomic and kinodynamic constraints, including bounds on the velocity and acceleration of the agent.
\end{enumerate}



\vspace{2.5mm}

\noindent\textbf{Main Contributions:} We propose a novel approach for optimal end-to-end learnable multi-robot navigation in constrained spaces such as doorways and corridor intersections. The key insight of our approach is to formulate both safety as well as liveness via differentiable CBFs within the neural network controller.

\begin{itemize}[noitemsep]

\item We propose the first safe, robust neural controller with provable liveness guarantees (Theorem~\eqref{thm: liveness}) for agile and smooth multi-robot navigation in constrained spaces.

\item Unlike prior methods, \modelname, while still fully decentralized, is minimally invasive, only perturbing the speed of a robot by the smallest amount necessary, without changing its direction, resulting in smoother and more optimal trajectories.

\item \modelname's control cycle frequency is between $10$--$20\times$ faster than MPC-based optimization approaches and $20\times$ faster than MACBF~(\cite{macbf2021}), a state of the art end-to-end learning-based multi-robot navigation baseline. 

\item \modelname is robust to changes in the environment and agent configurations. Given a wide range of diverse environments, \modelname succeeded in $30\%$ more scenarios compared to MPC-based baselines.


\end{itemize}

\section{Related Work}
\label{sec: related_work}

In this section we discuss the variety of methods that have been applied to safely navigate multi-robot deadlock avoidance scenarios.

\subsection{Simultaneous Safety and Deadlock Resolution Methods}

Most current methods rely on cooperative, predetermined behavior between the robots to resolve conflicts. For instance, in~(\cite{zhou2017fast, wang2017safety, zhu2022, chen2023multi}), the authors heuristically define a clockwise movement to establish the right of way (the rightmost agent moves first). Other deadlock resolution methods generate vehicle priorities through reservation systems like first come first serve (\cite{au2015autonomous}) or auctions with predefined bidding strategies (\cite{carlino2013auction, suriyarachchi2022gameopt}). Garg et al.~(\cite{garg2024foundationmodelsrescuedeadlock}) used large language models to act as a central arbiter to resolve the conflict. However, these heuristics result in larger perturbations than necessary to avoid the deadlock, and they often falter in scenarios with unpredictable agents where cooperation is not guaranteed. Chandra et al. (\cite{chandra2024deadlock}) model liveness as a control barrier function (CBF) within a receding-horizon control scheme, adjusting the agent's trajectory to avoid collisions and deadlocks in the immediate future. Although effective in certain settings, these methods lack robustness in new or highly dynamic scenarios, as they often require extensive hand-tuning for each unique situation.

\subsection{Other Multi-Robot Navigation Approaches}

Multi-agent path-finding (MAPF) algorithms such as conflict-based search (CBS) and its variants (\cite{sharon2015conflict}), $M^*$ (\cite{stern2019multi}), ICTS (\cite{sharon2013increasing}), and Uniform Cost Search (\cite{mao2024collision}, \cite{guo2024spatio}, \cite{mcnaughton2011motion}). Many MAPF techniques yield a globally optimal solution, but require centralized solvers~(\cite{chandra2023socialmapf}), discretized state spaces, and low-dimensional observation, state, and action spaces, thereby restricting its use primarily to simulation or offline as a coarse preprocessing step (\cite{ma2022graph}). These assumptions are prohibitive to most real world robots that possess non-holonomic and kinodynamic constraints.
Learning-based approaches use imitation learning~(\cite{hussein2018deep, yan2022mapless, daftry2017learning, macbf2021, barriernet2021}) and multi-agent reinforcement learning (MARL)~(\cite{liu2020pic, martinez2012multi, mehr2023maximum, chandra2024socialgym, wu2023intent}) to learn a navigation policy using supervised or unsupervised learning methods. While these methods offer robustness and scalability, they model safety as a learned behavior rather than a constraint. Thus, model performance is constrained to the training and testing datasets and may fail in novel, unobserved conditions. 
\begin{wraptable}{r}{0.6\textwidth}
    \centering
    \resizebox{.9\linewidth}{!}{
    \begin{tabular}{cc}
    \toprule
       Symbol  & Description  \\
       \midrule
       \multicolumn{2}{c}{\textit{Problem formulation (Section~\ref{sec: problem_formulation}})}\\
       \midrule
       $k$ & Number of agents\\
       $T$ & Planning horizon\\
       $\mathcal{X}$ & General continuous state space\\
       $\mathcal{X}_I$ & Set of initial states\\
       $\mathcal{X}_g$ & Set of final states\\
       $x^i_t$ & State of agent $i$ at time $t$\\
       $\Omega^i$ & Observation set of agent $i$\\
       $\mathcal{O}^i: \mathcal{X}\rightarrow \Omega^i$ & Agent $i$'s observation function\\
       $o^i_t$ & Observation of agent $i$ at time $t$\\
       $\Gamma^i$ & Agent $i$'s trajectory\\
       $\widetilde\Gamma^i$ & Agent $i$'s preferred or desired trajectory\\
       $\Psi^i$ & Agent $i$'s input control sequence\\
       $\mathcal{T}: \mathcal{X} \times \mathcal{U}^i \rightarrow \mathcal{X}$ & Environment transition dynamics (Equation~\ref{eq: discrete_dynamics})\\
       $\mathcal{U}^i$ & Action space for agent $i$\\
       $\mathcal{J}^i$ & Running cost for agent $i$ ($\mathcal{J}^i_t:\mathcal{X} \times  \mathcal{U}^i \rightarrow \mathbb{R}$)\\
       $\mathcal{J}^i_T$ & Terminal cost at time $T$\\
       $\mathcal{C}^i\left( x^i_t \right) \subseteq \mathcal{X}$ & Convex hull of agent $i$\\
       $\overline{\Gamma}^{i}$ & Agent $i$'s minimally invasive trajectory\\    
       $z_t^i \in \mathcal{X} \times \Omega^i$ & Model input, consisting of $x_t^i$ and $o_t^i$.\\
       $\mathcal{F}: \mathcal{X} \times \Omega^i \rightarrow \mathcal{U}^i$ & network defining agent $i$'s controller\\
       \midrule
       \multicolumn{2}{c}{\textit{Technical Approach (Section~\ref{sec: technical_approach}})}\\
       \midrule
       $b^i\left(z^i_t\right):\mathcal{X} \times \Omega^i \longrightarrow \mathbb{R}$ & Control barrier function (CBFs) \\
       $b_o^i, b_l^i\left(z_t^i\right)$ & Obstacle and liveness CBFs\\
       $L_f b^i\left(z^i_t\right),  L_g b^i\left(z^i_t\right)$ & Lie derivatives of $b^i\left(x^i_t\right)$ w.r.t $f$ and $g$.\\
       $p_t^i, \theta_t^i, v_t^i$ & Position, heading, and velocity of agent $i$\\
       $\widehat{\Gamma}^i, \widehat{\Psi}^i$ & State and input controls trajectory dataset\\
       $p^o, p^l$ & Penalty values defining the relaxation of the CBFs\\
         \bottomrule
    \end{tabular}
    }
    \caption{Summary of notation used in this paper.}
    \label{tab: notation}
\end{wraptable}

Optimization-based methods, particularly Model Predictive Control (MPC) with control barrier functions (CBFs), have been employed to calculate safe trajectories over short future horizons (\cite{mestres2024distributed, zhu2020trajectory, suriyarachchi2022gameopt, suriyarachchi2024gameopt+}). These receding-horizon control strategies iteratively solve an optimization problem at each step, adjusting the agent's trajectory to avoid collisions in the immediate future. Although effective in certain settings, these methods lack robustness in new or highly dynamic scenarios, as they often require extensive hand-tuning for each unique situation.
Another class of distributed optimization-based methods uses dynamic game theory to compute a Nash equilibria for similar problems that dictates all agents' trajectories (\cite{GameTheorySelfDriving, wang2021game,schwarting2021stochastic,sun2015game,sun2016stochastic,morimoto2003minimax,fridovich2020efficient,di2018differential, chandra2022game}). However, this requires knowledge of the other agents' objective functions, their desired trajectories, and their kinodynamic constraints (\cite{GameApproachMultiAgent2017}). 

The algorithms described above are able to either guarantee safety, liveness, or robustness to adapt to new scenarios well, but are unable to do all three. This research builds on foundational ideas of Neural Network based controllers and Control Barrier Functions (CBFs) (\cite{wang2017safety}) to provide safety and liveness while being robust.

\section{Problem Formulation}
\label{sec: problem_formulation}

In this section, we formulate the problem objective that we aim to solve. Notation for variables referenced is summarized in table \ref{tab: notation}.
We formulate the problem as the following partially observable stochastic game (POSG) (\cite{hansen2004dynamic}):
$\langle k, T, \mathcal{X}, \mathcal{U}^i, \mathcal{T}, \mathcal{J}^i, \mathcal{O}^i, \Omega^i \rangle$, where $k$ refers to the number of agents, and $T$ refers to the finite horizon length of the game. A superscript of $i$ refers to the $i^{th}$ agent, where $i \in [1, .., k]$ and a subscript of $t$ refers to discrete time step $t$ where $t \in [1, .., T]$. At any given time step $t$, agent $i$ has state $\mathbf{x}_t^i \in \mathcal{X}$ where $\mathcal{X}$ is the general, continuous state space. $\mathcal{U}^i$ is the continuous control space for robot $i$ representing the set of admissible inputs for $i$. Agent dynamics are defined by the transition function $\mathcal{T}: \mathcal{X} \times \mathcal{U}^i \rightarrow \mathcal{X}$ at time $t \in [1, .., T - 1]$. The cost function, $\mathcal{J}^i: \mathcal{X} \times \mathcal{U}^i \rightarrow \mathbb{R}$ is used to determine the cost of the specified control action in the agent's current state, and the terminal cost $\mathcal{J}^i_T: \mathcal{X} \rightarrow \mathbb{R}$ is used to calculate the cost of the terminal state $\mathbf{x}^i_N$. Each agent $i$ also has an observation $o_t^i \in \Omega^i$ which is determined via the observation function $o_t^i = \mathcal{O}^i (x_t^i)$. A discrete trajectory of agent $i$ is defined by $\Gamma^i = (\mathbf{x}_0^i, \mathbf{x}_1^i, ..., \mathbf{x}_T^i)$, and has a corresponding control input sequence $\Psi^i = (u_0^i, ..., u_{T-1}^i)$. Agents follow the control-affine dynamics $\mathbf{x}_{t+1}^i = f(\mathbf{x}_t^i) + g(\mathbf{x}_t^i, u_t^i)$, where $f,g$ are locally Lipschitz continuous functions. At any time $t$, each agent $i$ occupies a space given by $C^i(\mathbf{x}_t^i) \subseteq \mathcal{X}$. Two robots $i, j$ are considered colliding at time $t$ if $C^i(\mathbf{x}_t^i) \cap C^j(\mathbf{x}_t^j) \neq \emptyset$.

A \textit{Social Mini-Game} (SMG) is a variation of the generic POSG where each agent has a starting state, $\mathbf{x}_0^i \in \mathcal{X}_I$ and a goal state $\mathbf{x}_g^i \in \mathcal{X}_g$ where $\mathcal{X}_I$ and $\mathcal{X}_g$ are subsets of the continuous space $\mathcal{X}$ (\cite{chandra2024deadlock}). Additionally, each agent has a preferred trajectory, denoted by $\widetilde\Gamma^i$, which would be the desired trajectory that the agent would take in the absence of any other agents:
\vspace{-10pt}
{\small \begin{subequations}
\begin{align}
\left( \widetilde\Gamma^i, \widetilde\Psi^i\right) =& \arg\min_{(\Gamma^i,  \Psi^{i})} \sum_{t={0}}^{T-1} \mathcal{J}^i\left(\mathbf{x}^i_t, u^i_t\right) + \mathcal{J}^i_T\left(\mathbf{x}^i_T\right) \\
\text{s.t}\;\;  \mathbf{x}^i_{t+1}=& f\left(\mathbf{x}^i_t\right) + g\left(\mathbf{x}^i_t\right)u^i_t,\quad\forall t\in[1;T-1]\label{eq: discrete_dynamics}\\
u_{min} &\leq u_t^i \leq u_{max}\\
&\mathbf{x}^i_0\in \mathcal{X}_I\label{eq: libe_constraint},\quad \mathbf{x}^i_T\in \mathcal{X}_g
\end{align}
\label{eq: prob_1}
\end{subequations}}

\vspace{-10pt}
\begin{wrapfigure}{r}{0.5\linewidth}
    \centering
        \subfigure[{\footnotesize SMG Scenario}]{
        \label{fig: smg_example}
        \includegraphics[width=0.42\linewidth]{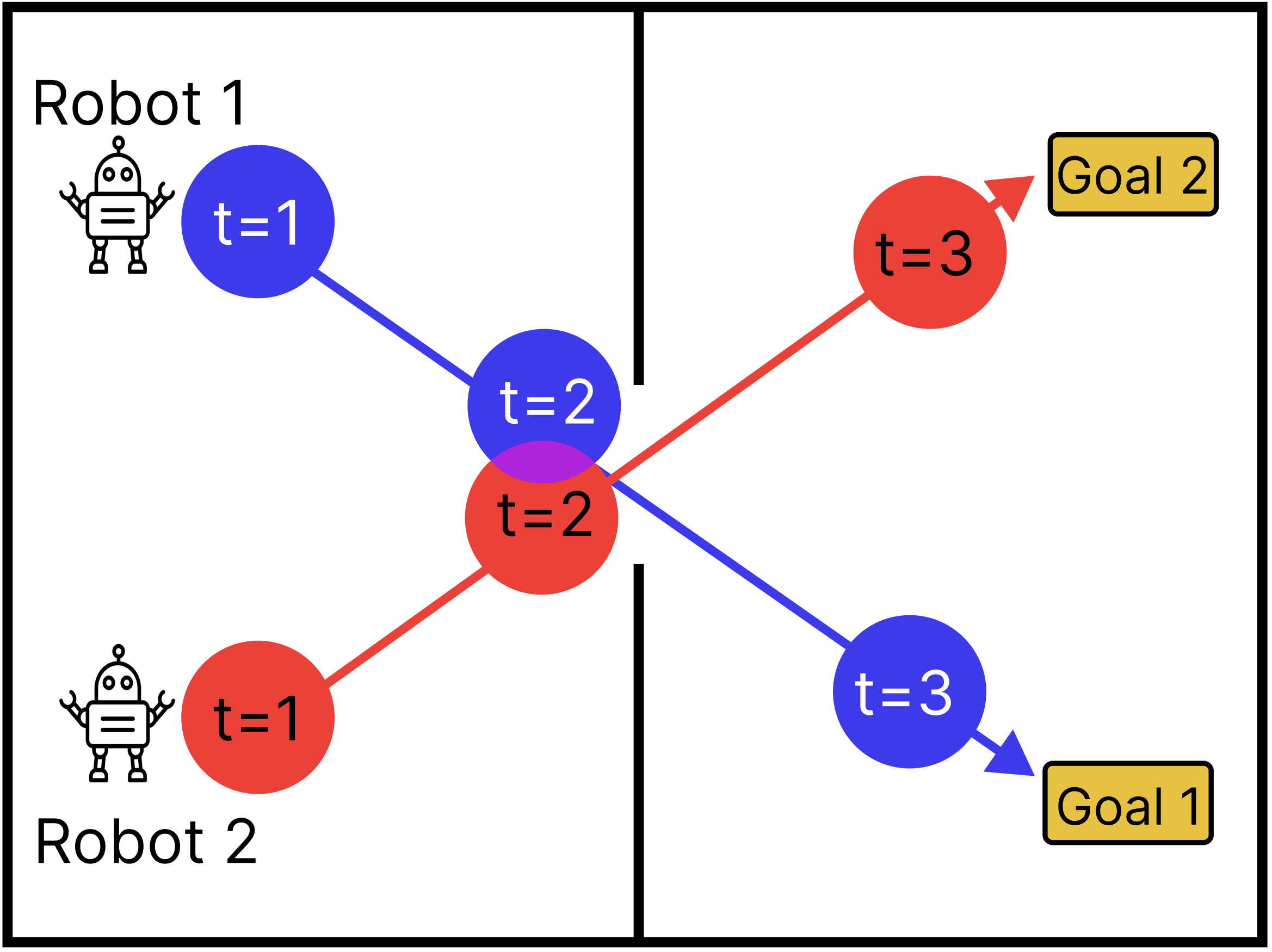}
    }\qquad
        \subfigure[{\footnotesize Non-SMG Scenario}]{
        \label{fig: non_smg_example}
        \includegraphics[width=0.42\linewidth]{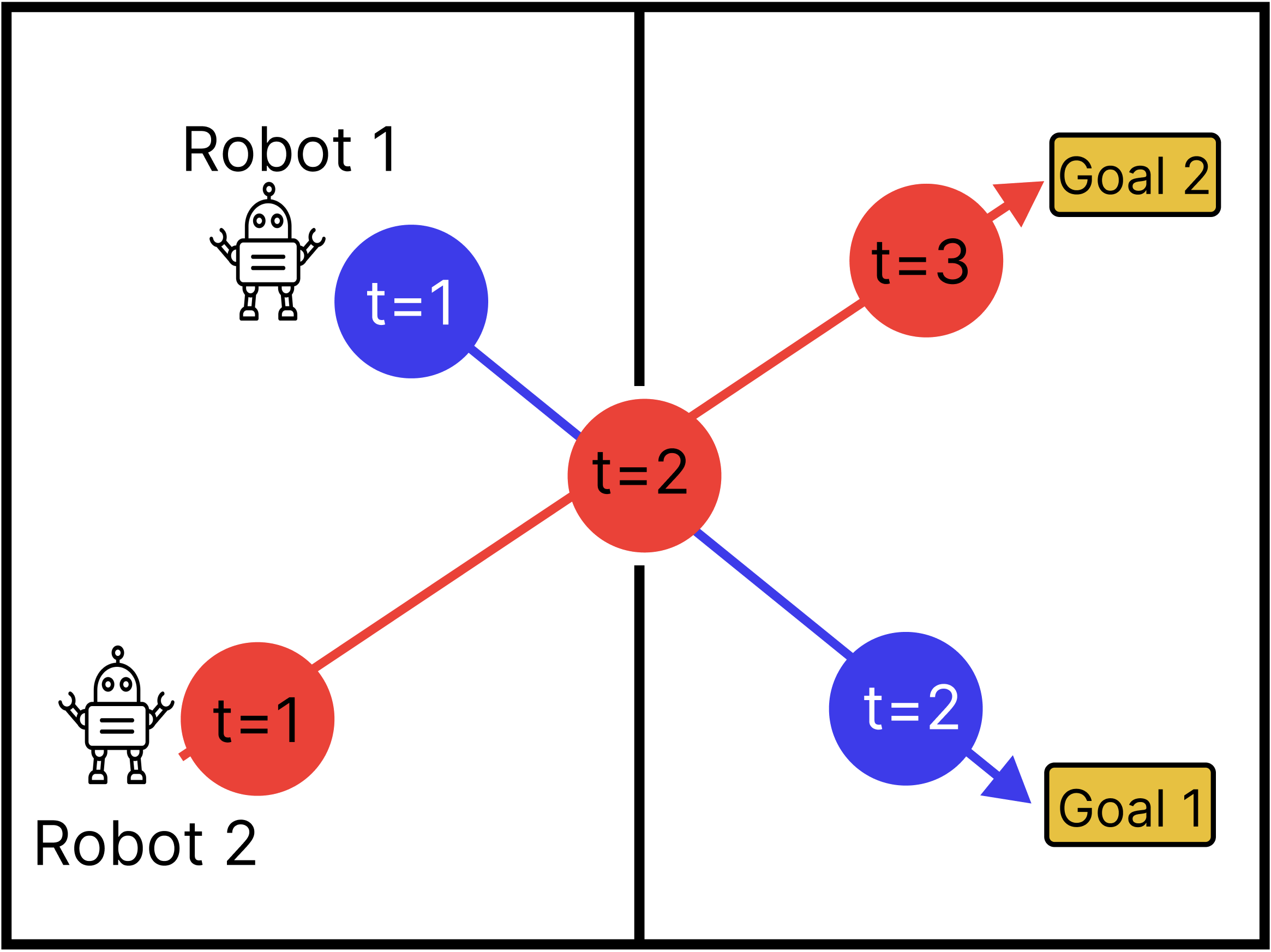}
    }
    \caption{Example SMG and Non-SMG scenarios with agent 1's desired trajectory in red and agent 2's desired trajectory in blue. Their starting and goal locations are indicated by $t = 1$ and $t = 4$, respectively, with $t$ being used to show the agents' time-parameterized desired trajectories. Collisions are shown in purple.}
    \label{fig: smg_formulation_scenarios}
\end{wrapfigure}

A game is considered a social mini-game when if for some $t \in [1, .., T]$, there exists at least one pair $i, j$ where $i \neq j$ such that $C^i(\mathbf{x}_t^i) \cap C^j(\mathbf{x}_t^j) \neq \emptyset$ where $\mathbf{x}_t^i \in \widetilde\Gamma^i$ and $\mathbf{x}_t^j \in \widetilde\Gamma^j$. As shown in Figure \ref{fig: smg_formulation_scenarios}, if any two agents' desired spatiotemporal trajectories intersect, even by a small amount, the game is considered an SMG, which results in either a collision or a deadlock~(\cite{chandra2024deadlock}). 
To prevent collisions and deadlocks in an SMG, agents $i$ and $j$ need to perturb their desired trajectories, $\Gamma^i$ and $\Gamma^j$, to avoid collisions. That is, we desire $\Gamma^i, \Gamma^j$ such that $C^i(\mathbf{x}_t^i) \cap C^j(\mathbf{x}_t^j) = \emptyset$ for all $t \in [1;T]$ where $\mathbf{x}_t^i \in \Gamma^i$ and $\mathbf{x}_t^j \in \Gamma^j$.

\noindent\textbf{Objective:} Our goal is to prevent both collisions and deadlocks in an SMG by perturbing the preferred trajectory, $\widetilde\Gamma^i$, in a \textit{minimally invasive} manner. We define the new trajectory, $\overline{\Gamma}^i$ to be minimally invasive if it perturbs the agent's velocity throughout the trajectory by the minimal amount possible without any spatial deviation from the preferred trajectory and avoids collisions with all other agents while following kinodynamic constraints. This mimics human yielding when crossing an intersection or passing through a doorway, where humans simply slow down to allow someone else to pass, but don't change their intended spatial path. Mathematically, this means that over the planning time-horizon $T$, the following constraints must be incorporated into Problem~\eqref{eq: prob_1}.

\vspace{-10pt}
{\small \begin{equation}
\sum_{t=0}^{T-1} D(\overline{p}_t^i, \widetilde\Gamma) \leq \epsilon^{(1)},\qquad \sum_{t=1}^{T-1} (\overline{v}_t^{i}  - \overline{v}_{t-1}^i) \leq \epsilon^{(2)}, \qquad C^i(\overline{\mathbf{x}}_t^i) \cap C^j(\mathbf{x}_t^j) = \emptyset 
\label{eq: added_constraints}
\end{equation}
}

\vspace{-10pt}

for all $i, j \in [0,k] \text{ s.t. } i \neq j$ where $D(\overline{p}_t^i, \widetilde\Gamma)$ represents the spatial deviation of point $\overline{p}^i$ from the desired trajectory $\widetilde\Gamma$. Formally, our goal is now to solve Problem~\eqref{eq: prob_1} with added constraints give by~\eqref{eq: added_constraints}. We define a minimally invasive solution as one that minimizes the value of $\epsilon^{(1)}$, and the event of a tie
minimizes $\epsilon^{(2)}$.

\section{\modelname: Technical Approach}
\label{sec: technical_approach}



\begin{wrapfigure}{r}{0.65\linewidth}
\centering
\includegraphics[width=\linewidth]{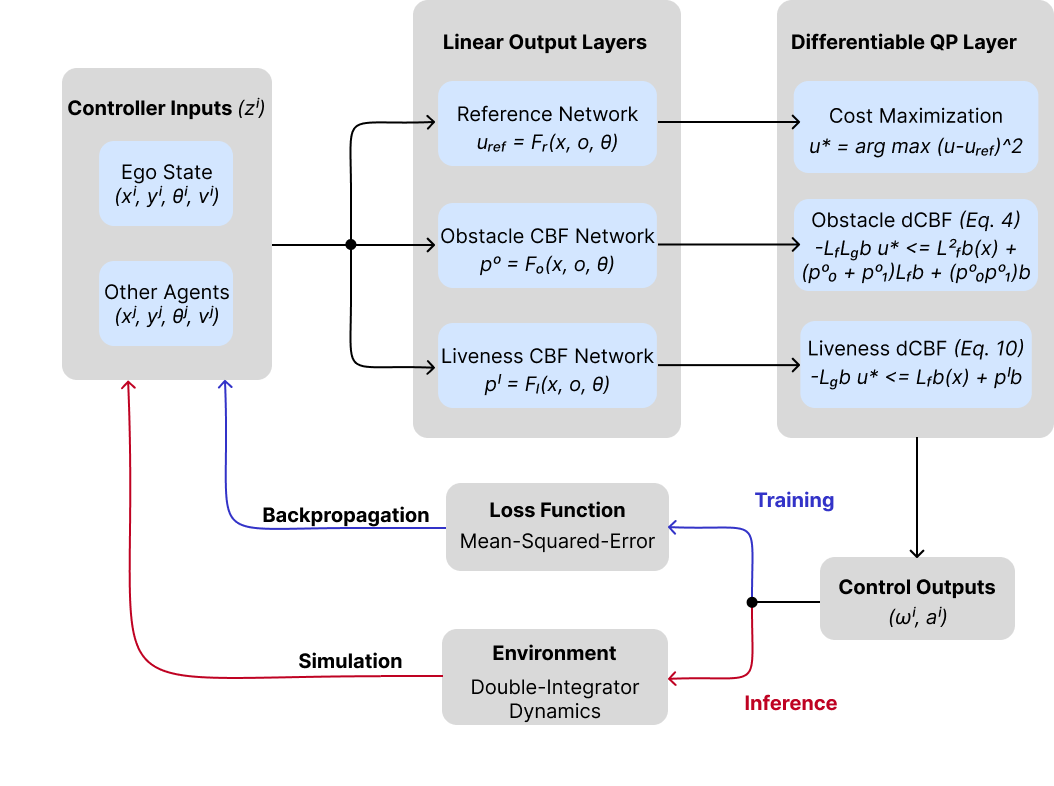}
\caption{\textbf{\modelname Architecture Overview} The ego state and observation inputs get fed into a feedforward network with three individual outputs: the reference control, the obstacle penalties, and the minimal invasiveness penalties. These three outputs get fed into a differentiable QP layer which solves a standard QP problem with inequality constraints (Equations \eqref{eqn:penalty_lie_ineq} and \eqref{eq: liveness_penalty_lie_ineq}) to enforce the CBFs. During backpropagation, the optimal reference value, as well as optimal penalty values for the CBF constraints, are learned.
}
  \label{fig: live_bnet_architecture}
\end{wrapfigure}

Each \modelname agent is defined by a neural network, where the function $\mathcal{F}: \mathcal{X} \times \Omega^i \rightarrow \mathcal{U}^i$ defines the feed-forward function of the model. The input to the network is the agent's state and observation, namely $z^i_t = (\mathbf{x}^i_t, o^i_t)$. The cost function for the model is defined via a mean-squared-error loss function such that $\mathcal{J}^i = \frac{1}{T} \sum_{t=0}^T (F(\widehat{z}_t^i) - \widehat{u}_t^i)^2$ for all corresponding $(\widehat{z}^i, \widehat{u}_t^i)$ in a trajectory dataset $(\widehat{\Gamma}^i, \widehat{\Psi}^i)$. The \modelname network (visualized in Figure~\ref{fig: live_bnet_architecture}) takes in the agent's state and observation as inputs, and passes it through three subnetworks, $F_r$, $F_o$, and $F_l$, which generate the reference control output $u_{ref}$, along with penalty values for the obstacle barrier function, $p^o$ and the liveness barrier function, $p^l$. $p^o$ determines the level of relaxation on the obstacle CBF (\ref{subsec: avoidance_dcbf}) and $p^l$ determines the level of relaxation on the liveness CBF (\ref{subsec: min_invasive_layer}). 
The network subsequently feeds these variables into a quadratic programming (QP) layer (\cite{amos2017optnet}) to maximize the function $(u - u_{ref})^2$ given differential CBF (dCBF) constraints defined by $p^o$ and $p^l$. During backpropagation, the mean squared error loss between the outputted $u$ and the optimal $\hat{u}$ (provided via the training dataset) is used to optimize the weights in the networks $F_r$, $F_o$, and $F_l$ via gradient descent.

The OptNet framework (\cite{amos2017optnet}) that \modelname is built on allows QP problems to be passed in with generalized inequality constraints in the form $G(z)u \leq h(z)$. Sections \ref{subsec: avoidance_dcbf} and \ref{subsec: min_invasive_layer} discuss the usage of Higher-Order CBFs (HOCBFs) in order to construct barrier function inequalities that are dependent on the control inputs. We refer the reader to [\cite{xiao2019control}] for more details. Our approach presents both a multi-agent differential CBF (dCBF) layer for multi-dimensional state spaces, as well as a liveness dCBF filter to ensure deadlock avoidance in a minimally invasive manner.

\subsection{Multi-Agent Collision Avoidance dCBFs}
\label{subsec: avoidance_dcbf}

We present an environment setup with double-integrator unicycle dynamics and a state space defined by $(x, y, \theta, v) \in \mathcal{X}$ where $x$ and $y$ represent the 2D position of the robot, $\theta$ represents the heading, and $v$ represents the forward velocity. The control inputs are defined by $(\omega, a) \in \mathcal{U}^i$ where $\omega$ represents turning velocity and $a$ represents linear acceleration. Observations of agent $i$ includes the positions, headings, and velocities of other, as well as the positions of static obstacles. We also refer to both agents and static obstacles under the general term \textit{obstacles} for this subsection, as agents are treated as moving obstacles whom we have no control over, where we utilize forwards dynamics to derive their future position based on their current velocity and heading. For any static obstacle $j$, we set $\theta^j = 0$ and $v^j = 0$. for static obstacles. Thus, from agent $i$'s perspective, the transition dynamics are defined as:


{\small \begin{equation}
\dot{\mathbf{x}} =
\begin{bmatrix}
\mathbf{f}^i(\mathbf{x}^i) \\ 
\mathbf{f}^j(\mathbf{x}^j)
\end{bmatrix}
+
\begin{bmatrix}
\mathbf{B} & \mathbf{0} \\ 
\mathbf{0} & \mathbf{0}
\end{bmatrix}
\begin{bmatrix}
\mathbf{u}^i \\ 
\mathbf{u}^j
\end{bmatrix},
\label{eq: multi-agent-state-space-system-equation}
\end{equation}}

where \(\dot{\mathbf{z}} = \begin{bmatrix} \dot{x^i}, \dot{y^i}, \dot{\theta^i}, \dot{v^i}, \dot{x^j}, \dot{y^j}, \dot{\theta^j}, \dot{v^j} \end{bmatrix}^\top\). The control inputs are applied using an input matrix, \(\mathbf{B}\), where 
\(\mathbf{B} = \begin{bmatrix} 0,0,1,0; 0,0,0,1\end{bmatrix}^\top\), 
and the control vectors for agents \(i\) and \(j\) are \(\mathbf{u}^i = \begin{bmatrix} \omega^i, a^i \end{bmatrix}^\top\) and \(\mathbf{u}^j = \begin{bmatrix} \omega^j, a^j \end{bmatrix}^\top\), respectively. The dynamics for each agent are represented by \(\mathbf{f}^i(\mathbf{x}^i)\) and \(\mathbf{f}^j(\mathbf{x}^j)\), where 
\(\mathbf{f}^i(\mathbf{x}^i) = \begin{bmatrix} v^i \cos(\theta^i), v^i \sin(\theta^i), 0, 0 \end{bmatrix}^\top\).
%
For simplicity, agents and static obstacles are considered to occupy circles of their respective radius $r$. The barrier function for obstacle avoidance can be written out as $b(z) = (x^i - x^j)^2 + (y^i - y^j)^2 - (r^i + r^j)^2 >= 0$ where $(x^j, y^j) \in \mathbb{R}^2$ represents the position of obstacle $j$, and $r^j$ represents the radius of obstacle $j$. As shown in (\cite{barriernet2021}), the HOCBF for $b(z)$ which has degree 2 with respect to the control outputs yields the following inequality:

{\small
\begin{equation}
    -L_fL_gb(z)u \leq L^2_fb(z) + (p^o_1(z) + p^o_2(z))L_fb(z) + (\dot{p^o_1}(z) + p^o_1(z)p^o_2(z))b(z)
    \label{eqn:penalty_lie_ineq}
\end{equation}
}

where $p^o_1(z)$ and $p^o_2(z)$ are trainable penalty functions, and $\dot{p^o_1}(z)$ is set to $0$. 
Given that $b(z) = (x^i - x^j)^2 + (y^i - y^j)^2 - (r^i + r^j)^2$, we can solve for the Lie derivatives of $b(z)$ in the $f(\mathbf{x})$ and $g(\mathbf{x})$ vector fields:

{\small
\begin{subequations}
    \begin{equation}
        L_fb(z) = 2(x^i - x^j) (v^i \cos(\theta^i) - {v^j} \cos(\theta^j)) + 2(y^i - y^j) (v^i \sin(\theta^i) - {v^j} \sin(\theta^j))
    \label{eqn: moving_obs_barrier}
    \end{equation}
    \begin{equation}        
        L_f^2b(z) = 2({v^i}^2 + {v^j}^2 - 2v^i{v^j}(\cos(\theta^i + \theta^j))
    \end{equation}
    \begin{equation}
        L_gL_fb(z) =
        \begin{bmatrix}
        -2(x^i - x^j)v^i \sin(\theta^i) + 2(y^i - y^j)v^i \cos(\theta^i) \\ 2(x^i - x^j) \cos(\theta^i) + 2(y^i - y^j) \sin(\theta^i)
        \end{bmatrix}^T
    \end{equation}
    \label{eqn:moving_obs_lie_deriv}
\end{subequations}
}

These values, plugged into Equation~\eqref{eqn:penalty_lie_ineq}, generate our differential CBF constraint.





\subsection{Minimally Invasive Deadlock Prevention dCBFs}
\label{subsec: min_invasive_layer}

We introduce a minimally invasive differential CBF layer to look ahead and output accelerations that avoid collisions and maintain liveness in SMG scenarios. We first check if agent $i$'s projected spatial path intersects with the projected spatial path of any other agent $j$ for all $j$ in $[1; k]$ s.t. $i \neq j$, assuming that agent $j$ maintains their heading ($\theta^j$) and velocity ($v^j$). This check boils down to a ray intersection, which is calculated as $a$ and $b$

{\small
\begin{equation}
    a = (\Delta y * \cos(\theta^i) - \Delta x * \sin(\theta^i)) * det , \qquad b = (\Delta y * \cos(\theta^j) - \Delta x * \sin(\theta^j)) * det
    \label{eq: check_intersection}
\end{equation}
}

where $det = \hat{{v^j}}_x * \hat{v^i}_y - \hat{{v^j}}_y * \hat{v^i}_x$ and $\hat{v^i} = \frac{v^i}{|v^i|}, \hat{{v^j}} = \frac{{v^j}}{|{v^j}|}$. The rays intersect if $a$ and $b$ are both greater than 0. The minimally invasive liveness filter is only applied if the rays intersect. Let $\widetilde{p}^i \in C^i$ be the closest point on agent $i$'s convex hull to agent $i$. Similarly, let $\widetilde{p}^j \in C^j$ be the closest point on agent $j$'s convex hull to agent $i$. We denote $\widetilde{p}^i$ and $\widetilde{p}^j$ to be the agents' \textit{critical points}. Assuming that the agents have not yet collided, these points lie on the boundaries of their respective agent's convex hull, and thus are $r^i$ and $r^j$ from the agents' current positions. Mathematically, 
{\small
\begin{equation}
    {p^i}^* = p^i + r^i * \hat{p}^{j-i}, \qquad {p^j}^* = p^j - r^j * \hat{p}^{j-i}
    \label{eq: collision_point}
\end{equation}
}
where $\hat{p}^{j-i} = \frac{p^j - p^i}{|p^j - p^i|}$. Thus, our problem reduces to avoiding a point-point collision instead of a convex-convex collision. We first calculate $c$, the potential collision point of $\widetilde{p}^i$ and $\widetilde{p}^j$, by projecting them forwards along the agents' current spatial path given $\theta$ and $v$:
\vspace{-10pt}
{\small
\begin{equation}
    \begin{split}
    \widetilde{p}^{i'} &= \widetilde{p}^i + \mathbf{v}^i, \qquad \qquad \quad \widetilde{p}^{j'} = \widetilde{p}^j + \mathbf{v}^j \\
    a^i &= \widetilde{x}^i * \widetilde{y}^{i'} - \widetilde{y}^i * \widetilde{x}^{i'}, \quad a^j = \widetilde{x}^j * \widetilde{y}^{j'} - \widetilde{y}^j * \widetilde{x}^{j'} \\
    c_x &= (a^j v^i \cos(\theta^i) - a^i v^j \cos(\theta^j)) / det, c_y = (a^j v^i \sin(\theta^i) - a^i v^j \sin(\theta^j)) / det \\
    \label{eq: collision_point}
    \end{split}
\end{equation}
}
where $det = v^i v^j (\cos(\theta^i)\sin(\theta^j) - \sin(\theta^i)\cos(\theta^j))$. We then calculate the distance from each agent's critical point $\widetilde{p}$ to the collision point.

{\small
\begin{equation}
    d^i = \sqrt{(\widetilde{x}^i - c_x)^2 + (\widetilde{y}^i - c_y)^2}, \qquad d^j = \sqrt{(\widetilde{x}^j - c_x)^2 + (\widetilde{y}^j - c_y)^2} 
\end{equation}
}

Given each agent's velocity, we calculate $t^i$ and $t^j$, the time for each agent's critical point, $\widetilde{p}$, to reach the collision point $c$ as $t = v / d$. We split the scenario into two different cases. If $t^i < t^j$, then that indicates that the ego agent, agent $i$, will pass the collision point before agent $j$. In this scenario, we enforce the barrier function $t^j > t^i \rightarrow b(z) = t^j - t^i = \frac{d^j}{{v^j}} - \frac{d^i}{v^i} \geq 0$. Since the $c$ lies along each agent's heading, $\theta$, and since a minimally invasive trajectory involves zero spatial deviation from the desired path, $v$ is the direct derivative of $d$. That is, $dd/dt = v$. Since our barrier function is with respect to $v^i$, and the control input $a^i$ appears in the first derivative of our barrier function, we alter the HOCBF inequality from Equation~\eqref{eqn:penalty_lie_ineq} to instead be
\begin{equation}
    -L_gb(z)u \leq L_fb(z) + p^lb(z)
    \label{eq: liveness_penalty_lie_ineq}
\end{equation}

Thus, we compute the Lie derivatives and formulate the CBF in Equation~\ref{eqn: faster_agent_live_CBF}.

{\small
\begin{equation}
    \begin{split}
    b(z) &= \delta (t^j - t^i), \qquad L_g b(z) = \delta\left(\frac{a^id^i}{{v^i}^2}\right), \qquad     -\delta\left(\frac{a^id^i}{{v^i}^2}\right) \leq p^l_{\delta}(x) \left(\frac{d^j}{{v^j}} - \frac{d^i}{v^i}\right)
    \end{split}
    \label{eqn: faster_agent_live_CBF}
\end{equation}
}

where $\delta(\cdot)$ is an indicator function and is equal to $1$ if the ego-agent is faster and $-1$ if it is slower and should yield to agent $j$. Note that there is no Lie derivative along the $f$ function since in the absence of any control outputs, the barrier function $t^j - t^i$ would remain constant over time. The penalty value, $p^l_\delta(z)$ is chosen for these CBFs to allow the network to learn the necessary constraint levels in each scenario.



\begin{theorem}
    Assuming $p^l_{-1}(z), p^l_1(z)$ are differentiable functions with respect to $z$, then the \modelname constraints in Equation~\eqref{eqn: faster_agent_live_CBF} guarantee the liveness of the system defined by~\eqref{eq: multi-agent-state-space-system-equation}.
    \label{thm: liveness}
\end{theorem}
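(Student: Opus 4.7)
The plan is to show that the inequality in~\eqref{eqn: faster_agent_live_CBF} renders the set $\mathcal{S} = \{z : b(z) \geq 0\}$ forward invariant under the closed-loop dynamics, and then to argue that forward invariance of $\mathcal{S}$ precludes a deadlock, which is the paper's working definition of liveness.

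First I would verify that $b(z) = \delta\,(t^j - t^i)$ is a valid candidate control barrier function of relative degree one with respect to the ego acceleration $a^i$. Because the critical points $\widetilde{p}^i, \widetilde{p}^j$ lie on each agent's heading ray and the minimally invasive controller by construction does not change $\theta^i$, the signed distance $d^i$ to the projected collision point satisfies $\dot d^i = -v^i$, so $a^i$ appears in the first derivative of $b$ but not in $b$ itself. This justifies the first-order HOCBF form in~\eqref{eq: liveness_penalty_lie_ineq} instead of the second-order form~\eqref{eqn:penalty_lie_ineq} used for obstacle avoidance, and matches the Lie derivative expression $L_g b(z) = \delta\,(a^i d^i / {v^i}^2)$ stated in~\eqref{eqn: faster_agent_live_CBF}.

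Second, I would invoke the standard HOCBF forward-invariance theorem from~\cite{xiao2019control}. The hypothesis that $p^l_{-1}, p^l_1$ are differentiable in $z$ makes them locally Lipschitz, so $p^l_\delta(z)\,b(z)$ is an admissible extended class-$\mathcal{K}$ comparison term. By construction, the \modelname QP layer enforces $-L_g b(z)\,u \leq L_f b(z) + p^l_\delta(z)\,b(z)$ at every step, yielding the differential inequality $\dot b \geq -p^l_\delta(z)\,b$; a standard Gr\"onwall/comparison argument then gives $b(z(t)) \geq 0$ whenever $b(z(0)) \geq 0$. Since $b \geq 0$ means one agent's critical point always reaches the conflict location strictly before the other's, the two agents never need to simultaneously halt at the conflict location, and the ego retains strictly positive progress velocity along its preferred heading, which is precisely liveness.

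The main obstacle I anticipate is the non-smoothness of $b$ at the switching surface where $\delta$ flips sign and at the near-parallel-ray configuration where the determinant $det = v^i v^j(\cos\theta^i \sin\theta^j - \sin\theta^i \cos\theta^j)$ in~\eqref{eq: collision_point} vanishes and the collision point $c$ is undefined. A careful argument would treat $b$ as a piecewise-smooth barrier indexed by $\delta \in \{-1, +1\}$, apply a nonsmooth Nagumo-type condition on the switching boundary (where $t^i = t^j$ and the active constraint transitions between the two cases), and handle the parallel-ray set as a measure-zero configuration where the liveness filter is inactive because the spatial rays never intersect. The differentiability of each $p^l_\delta$ is what permits the Lipschitz-based invariance argument to be applied separately on each piece; beyond this technicality, the remainder is a direct application of the HOCBF machinery.
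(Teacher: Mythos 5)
Your proposal is correct, but it takes a more self-contained route than the paper, whose entire proof is a citation to Theorem~2 of~\cite{barriernet2021} together with a bookkeeping check that the trainable penalties $p^l_{-1}(z), p^l_1(z)$ have relative degree (two in position, one in velocity) at least that of the liveness constraint (one, since $a^i$ appears in the first derivative of $b$). You instead prove forward invariance of $\{z: b(z)\geq 0\}$ directly: the QP enforces $\dot b \geq -p^l_\delta(z)\,b$ pointwise, and Gr\"onwall gives $b(t)\geq b(0)\exp(-\int p^l_\delta)\geq 0$. For a relative-degree-one constraint this is actually cleaner, because the penalty function is never differentiated along the trajectory, so the relative-degree condition on $p^l_\delta$ that the paper verifies is not even needed; the paper's check matters for the general HOCBF setting (e.g., the second-order obstacle constraint~\eqref{eqn:penalty_lie_ineq}) but is vacuous here. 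Your argument also supplies two things the paper leaves implicit: the semantic step from invariance of $b\geq 0$ (a fixed ordering of arrival times at the conflict point) to deadlock-freedom, and the treatment of the nonsmooth loci where $\delta$ switches sign or the determinant in~\eqref{eq: collision_point} vanishes and the filter is inactive. What your route buys is transparency and fewer hypotheses; what the paper's citation buys is uniform treatment of all its CBF layers under one theorem. One caveat shared by both arguments and worth stating explicitly: the guarantee presumes the QP remains feasible at every step when the liveness constraint is stacked with the obstacle constraints~\eqref{eqn:penalty_lie_ineq}, and neither you nor the paper establishes that.
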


\begin{proof}
    The proof follows directly from~(\cite{barriernet2021})(c.f. Theorem 2) and relies on the requirement that the relative degree of $p^l_{-1}(z), p^l_1(z)$ with respect to each component in $z$ is greater than or equal to that of the liveness constraints in Equations~\eqref{eqn: faster_agent_live_CBF}.

    In Equations~\eqref{eqn: faster_agent_live_CBF}, since the control input ($a^i$) appears in the first derivative, the relative degree is $1$. Next, recall that each agent $i$ has a partial observability over the positions and velocities of other robots in its neighborhood, that is, $z = [p^i, v^i, p^j, {v^j}]$. The penalty functions $p^l_{-1}(z), p^l_1(z)$ have a relative degree of $2$ with respect to position and $1$ with respect to velocity, therefore, the conditions set forth in~(\cite{barriernet2021}) are satisfied.
\end{proof}

\section{Experiments and Results}
\label{sec: experiments_and_results}

We aim to investigate two main questions: $(i)$ how does \modelname compare with existing multi-robot navigation methods in SMGs? and $(ii)$ how does \modelname compare with methods specifically designed to navigate SMGs?
To investigate these questions, we compare \modelname to five baseline methods. These baselines include two receding-horizon optimization-based controllers, \textsc{MPC-CBF}~(\cite{zeng2021safety}), which maintains safety from static and moving obstacles using CBFs and \textsc{SMG-CBF}~(\cite{chandra2024deadlock}), an extension of the \textsc{MPC-CBF} formulation that employs a threshold-based liveness CBF with steep acceleration outputs. Two multi-agent learning-based approaches were also tested: \textsc{MACBF}~(\cite{macbf2021}), a neural network controller that learns safety through separated action and CBF networks, and \textsc{PIC}~(\cite{liu2020pic}), which utilizes graph convolutional neural networks with a permutation invariant critic for scalable navigation. \modelname was additionally tested against BarrierNet~(\cite{barriernet2021}), which also employs a differentiable CBF layer, but lacks incentives to follow live behavior. We track standard navigation metrics, namely number of collisions and deadlocks, makespan (time to goal for the slower agent), and runtime per control iteration (in seconds) in a variety scenarios. To evaluate minimal invasiveness (or smoothness) between the different approaches, we measure average change in velocity and the average path deviation from the agent's desired trajectory. 
\begin{table}[t]
\centering
\resizebox{\linewidth}{!}{
\begin{tabular}{ rcccccc }
 \toprule[1.5pt]
 \multicolumn{7}{c}{\textit{Doorway Scenario}} \\
 \toprule
 Method & \# Collisions & \# Deadlocks & Makespan (s) & $\Delta V$ (m/s) & $\Delta$ Path (m) & Cycle Time (s) \\
 \midrule
 MPC-CBF~(\cite{zeng2021safety}) & $0$ & $50$ & $N/A$ & $0.003 \pm 0.000$ & $0.016 \pm 0.000$ & $90.9 \pm 0.9$ \\
 MACBF~(\cite{macbf2021}) & $50$ & $0$ & $N/A$ & $0.006 \pm 0.000$ & $0.149 \pm 0.048$ & $171.05 \pm 1.66$ \\
 PIC~(\cite{liu2020pic}) & $50$ & $0$ & $N/A$ & $0.031 \pm 0.006$ & $0.041 \pm 0.002$ & $0.3 \pm 0.0$ \\
 BarrierNet~(\cite{barriernet2021}) & $50$ & $0$ & $N/A$ & $0.004 \pm 0.000$ & $0.010 \pm 0.002$ & $7.3 \pm 0.0$ \\
 SMG-CBF~(\cite{chandra2024deadlock}) & $0$ & $0$ & $13.8 \pm 0.0$ & $0.009 \pm 0.000$ & $0.001 \pm 0.000$ & $81.1 \pm 0.3$ \\
 \midrule
 \modelname & $0$ & $0$ & $13.8 \pm 0.0$ & $0.002 \pm 0.000$ & $0.008 \pm 0.000$ & $7.5 \pm 0.0$ \\
\midrule
  \multicolumn{7}{c}{\textit{Intersection Scenario}} \\
 \midrule
 MPC-CBF~(\cite{zeng2021safety}) & $0$ & $50$ & $N/A$ & $0.006 \pm 0.000$ & $0.170 \pm 0.001$ & $302.6 \pm 3.9$  \\
 MACBF~(\cite{macbf2021}) & $50$ & $0$ & $N/A$ & $0.300 \pm 0.002$ & $0.009 \pm 0.004$ & $170.8 \pm 1.6$ \\
 PIC~(\cite{liu2020pic}) & $50$ & $0$ & $N/A$ & $0.081 \pm 0.024$ & $0.033 \pm 0.003$ & $0.3 \pm 0.0$ \\
 BarrierNet~(\cite{barriernet2021}) & $50$ & $0$ & $N/A$ & $0.008 \pm 0.000$ & $0.033 \pm 0.001$ & $7.3 \pm 0.0$ \\
SMG-CBF~(\cite{chandra2024deadlock}) & $0$ & $0$ & $12.2 \pm 0.0$ & $0.012 \pm 0.000$ & $0.000 \pm 0.000$ & $157.0 \pm 0.5$ \\
\midrule
\modelname & $0$ & $0$ & $11.6 \pm 0.0$ & $0.011 \pm 0.000$ & $0.000 \pm 0.000$ & $9.3 \pm 0.1$ \\
 \bottomrule[1.5pt]
\end{tabular}
}
\caption{Experiment results in the Doorway and Intersection scenarios, averaged over 50 runs.}
 \label{tab: doorway_scenario_results}
 \vspace{-20pt}
\end{table}


\subsection{Experiment Setup}
\label{sec: experiment_setup}
The simulation environment was setup in Python using the \texttt{do\_mpc} framework package (\cite{LUCIA201751}), which is based on Casadi (\cite{andersson2019casadi}). The kindoynamic constraints remained constant across all scenarios, with a maximum velocity of $v = 0.3~m/s$, a maximum acceleration / deceleration of $a = 0.1~m/s^2$, and a maximum angular velocity of $\omega = 0.5~rad/s$. Agent radii of $0.1~m$, simulation step time of $0.2~s/iteration$ and simulation time of $18s$ were also maintained constant across all experiments. The following symmetric SMGs were tested: $(i)$ \textit{Doorway scenario}: multiple robots pass through a doorway with width $0.3m$, only large enough for one robot to fit at a time. The agents started at maximum velocity ($0.3m$), $2m$ to the left of the doorway and $0.5m$ north / south of it. \textit{Intersection scenario}: multiple robots cross an intersection that is $0.35m \times 0.35m$, allowing only one robot to pass at a time. The robots started $1m$ away from the intersection with their goal placed $1m$ past the intersection.



\paragraph{Training:} To train \modelname, we generated data using an optimal, receding-horizon MPC controller (\cite{zeng2021safety}) across various perturbations of the doorway and intersection scenarios. We optimized the MPC agent to create minimally invasive solutions that avoid deadlocks and collisions using the barrier functions defined in Equations~\ref{eqn: moving_obs_barrier} and \ref{eqn: faster_agent_live_CBF}. For each scenario, defining parameters such as starting state, goal state, and gap size were slightly perturbed to generate a data suite containing variations of the original scenario, thereby increasing the diversity and quantity of training data. For each scenario perturbation, the MPC's state and input cost matrices, as well as the CBF parameters (\cite{zeng2021safety}), were individually tuned to generate an optimal trajectory for that scenario. \modelname was subsequently trained on this augmented data through offline supervised learning, using mean squared error as the loss function. The \modelname network consists of single, linear layer with 256 nodes followed by three parallel linear layers, $F_r, F_o, F_l$, each with 64 nodes. We use ReLU as the activation function between layers. The training process spanned 30 epochs of shuffled data, with a batch size of 64 and a learning rate of 0.001. 

Baseline learning methods (Figure \ref{tab: doorway_scenario_results}) were trained within their native training loops and environments. These methods were subsequently adapted to the SMG environment described in Section~\ref{sec: experiment_setup} through custom state-action mapping interfaces to ensure environment consistency.

\subsection{Results}

For each agent, $50$ Doorway and Intersection scenarios were run to test the safety, liveness, and smoothness of the trajectories of each agent. The accumulated and averaged metric values are displayed in Table \ref{tab: doorway_scenario_results} and the resulting trajectories are shown in Figure \ref{fig: result_trajectories}. The \textsc{MACBF}, \textsc{PIC}, and BarrierNet models resulted in collisions due to mere soft constraints on safety with the limited training data. The \textsc{MPC-CBF} model was able to avoid collisions, but succumbed to deadlocks due to an inability to make safe progress. Additionally, \modelname performed minimally invasive behavior as it maintained its desired spatial trajectory and minimally perturbed its velocity, approaching the CBF threshold without ever violating it (Figure \ref{fig: cbf_violations_and_desired_traj}).

\begin{figure}[t]
\subfigure[{\footnotesize MPC-CBF}]{%
    \label{fig: mpc_cbf_doorway_scenario}
    \includegraphics[width=0.15\linewidth]{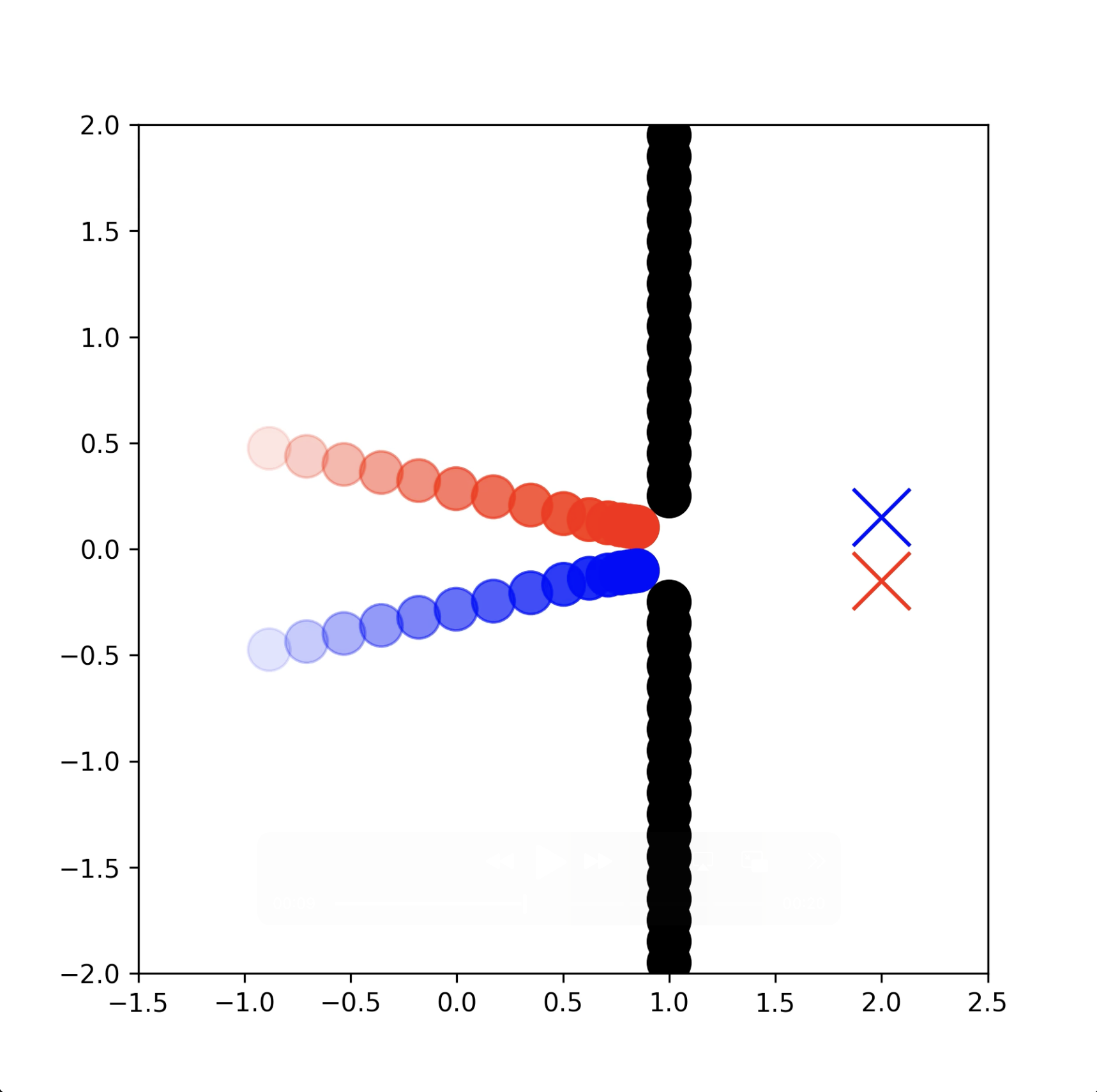}
}
\subfigure[{\footnotesize SMG-CBF}]{%
    \label{fig: smg_cbf_doorway_scenario}
    \includegraphics[width=0.15\linewidth]{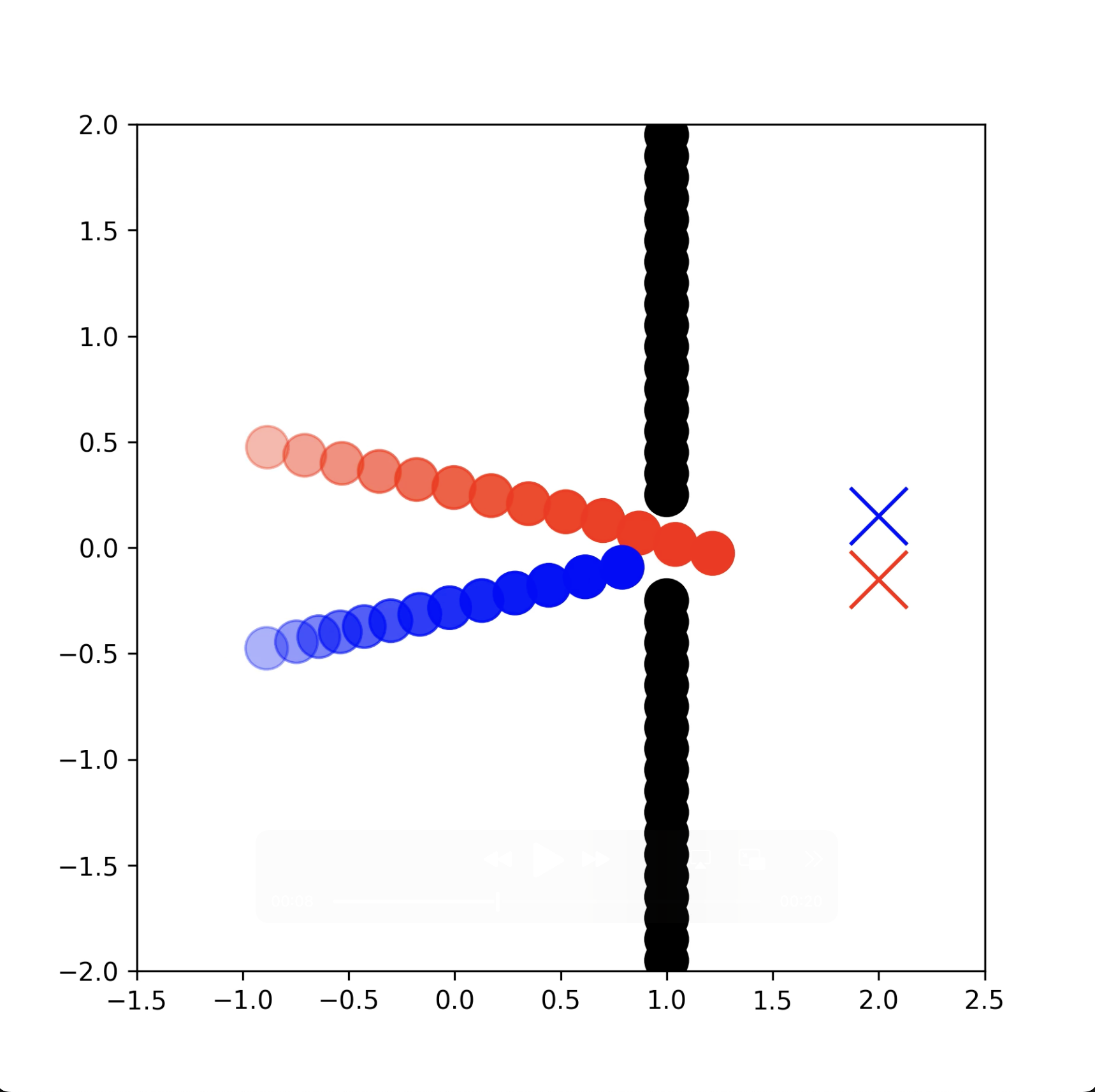}
}
\subfigure[{\footnotesize MACBF}]{%
    \label{fig: macbf_doorway_scenario}
    \includegraphics[width=0.15\linewidth]{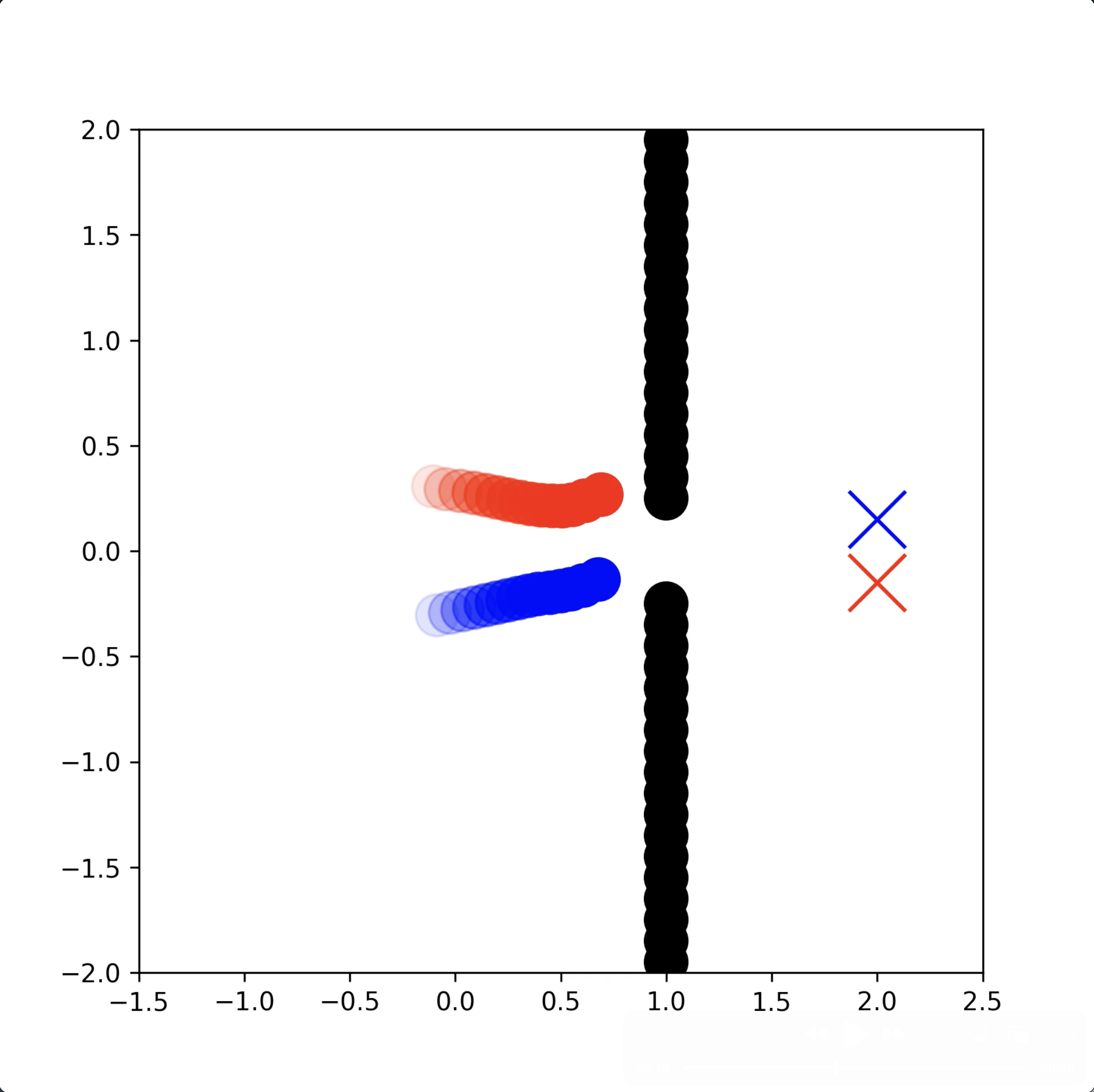}
}
\subfigure[{\footnotesize PIC}]{%
    \label{fig: pic_doorway_scenario}
    \includegraphics[width=0.15\linewidth]{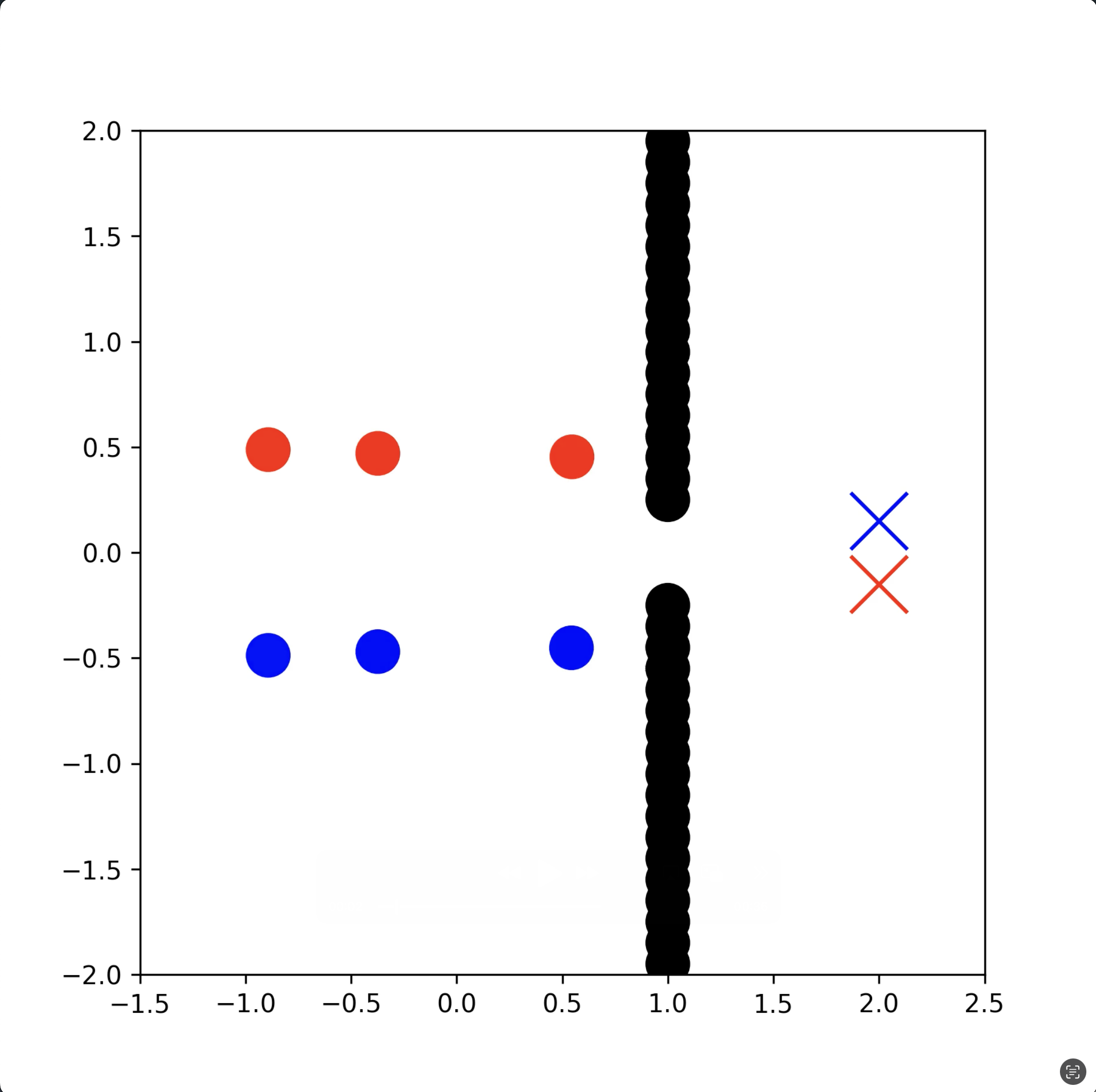}
}
\subfigure[{\footnotesize BarrierNet}]{%
    \label{fig: barriernet_doorway_scenario}
    \includegraphics[width=0.15\linewidth]{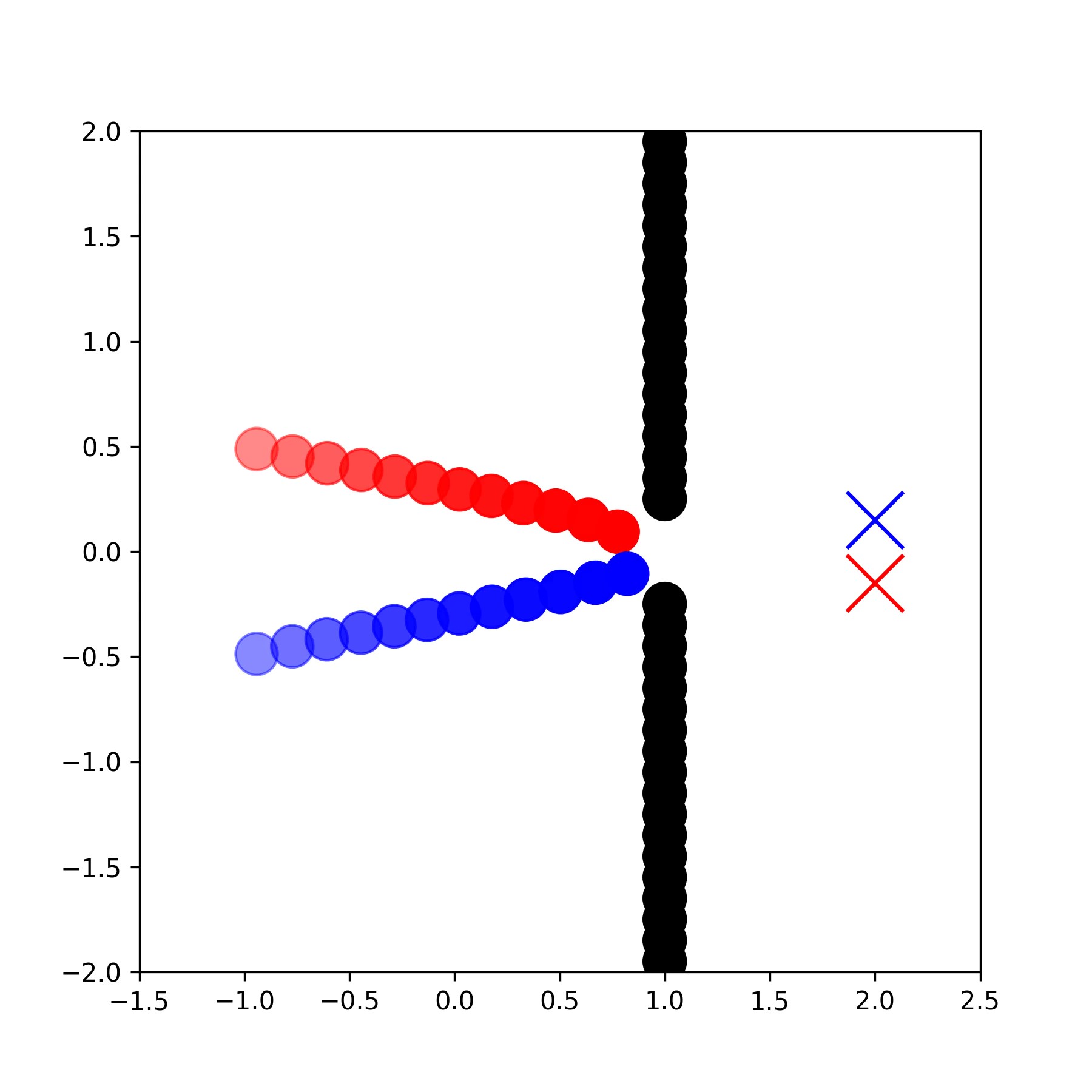}
}
\subfigure[{\footnotesize \modelname}]{%
    \label{fig: livenet_doorway_scenario}
    \includegraphics[width=0.15\linewidth]{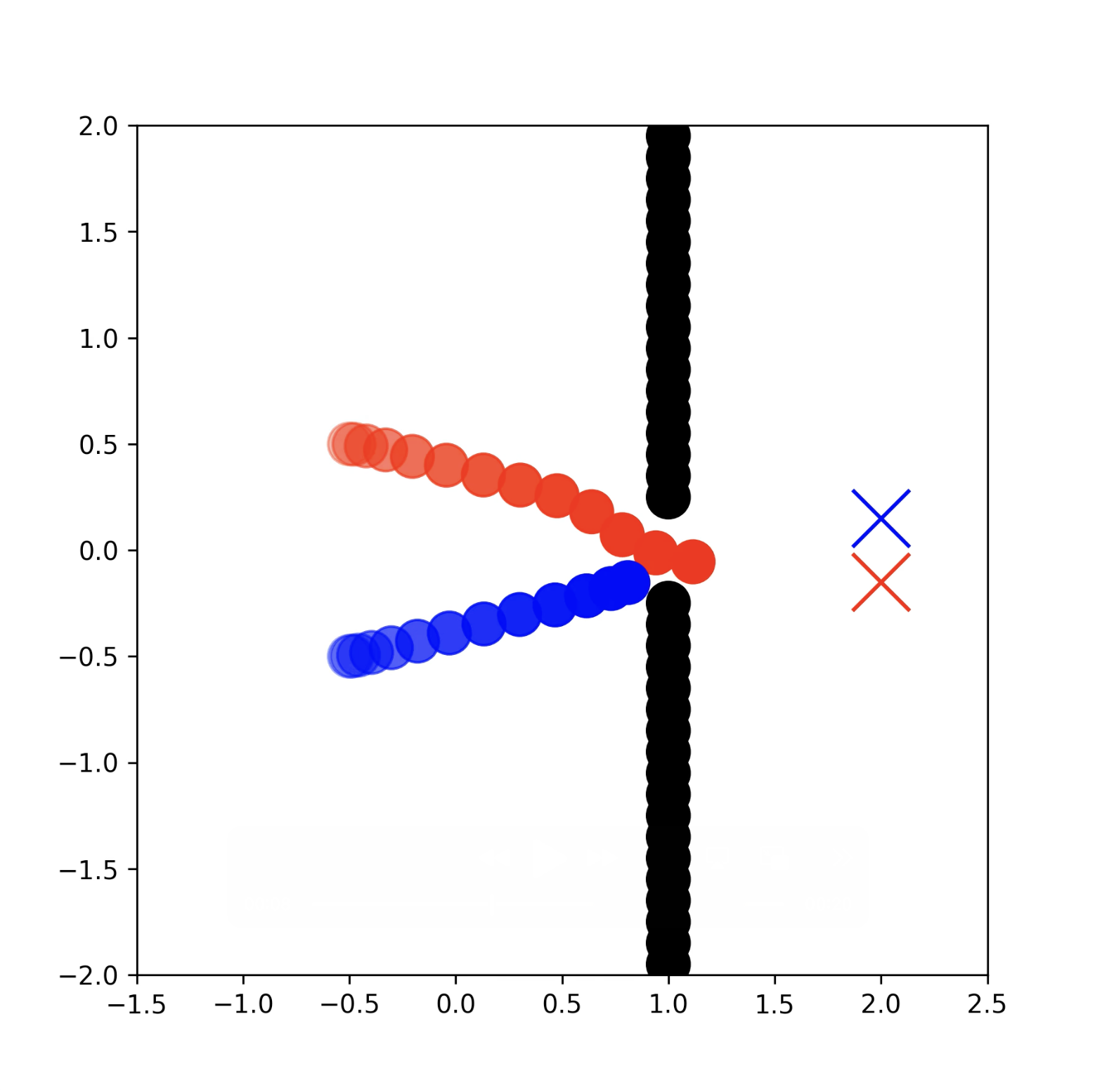}
}

\subfigure[{\footnotesize MPC-CBF}]{%
    \label{fig: mpc_cbf_intersection_scenario}
    \includegraphics[width=0.15\linewidth]{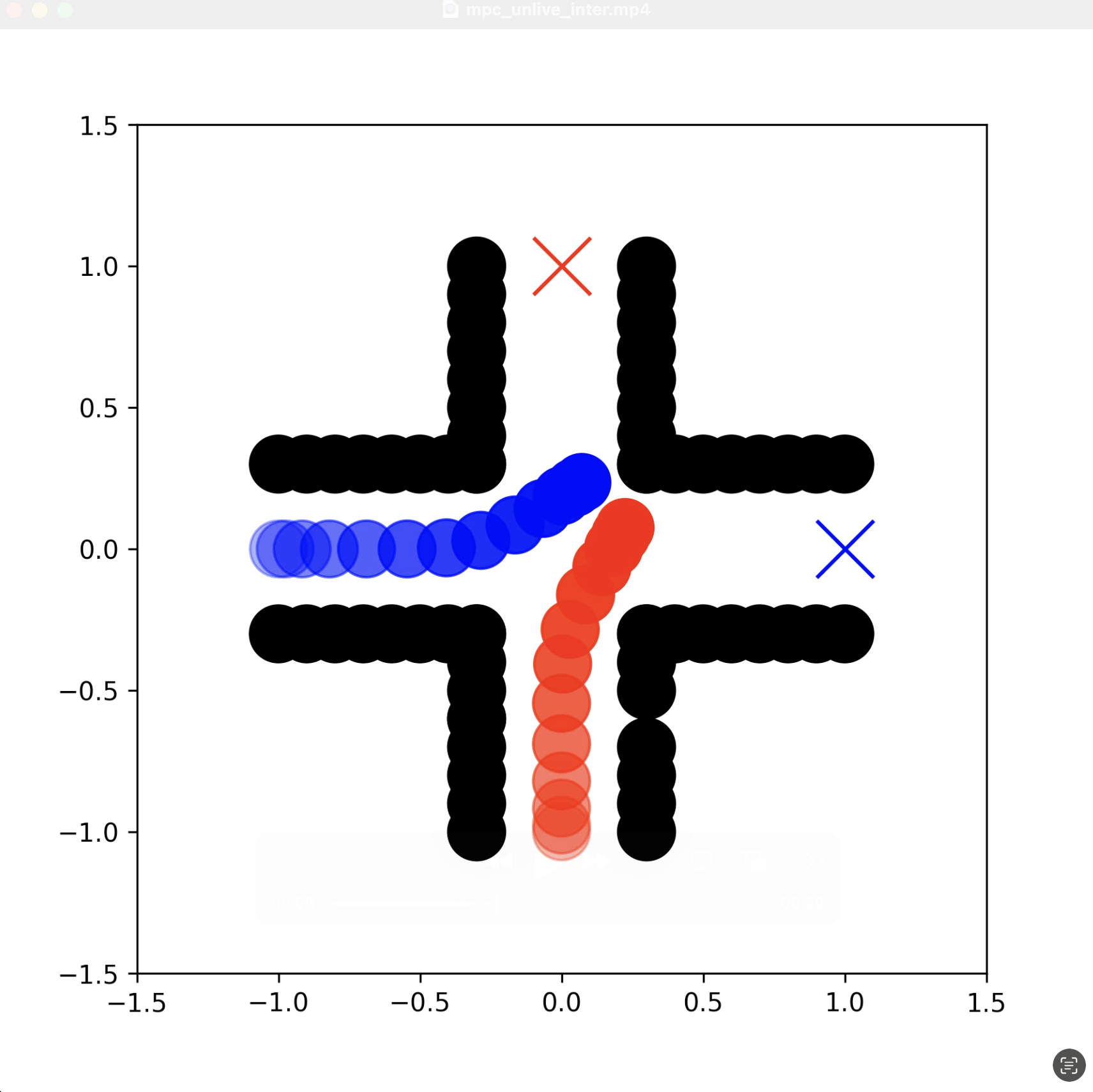}
}
\subfigure[{\footnotesize SMG-CBF}]{%
    \label{fig: smg_cbf_intersection_scenario}
    \includegraphics[width=0.15\linewidth]{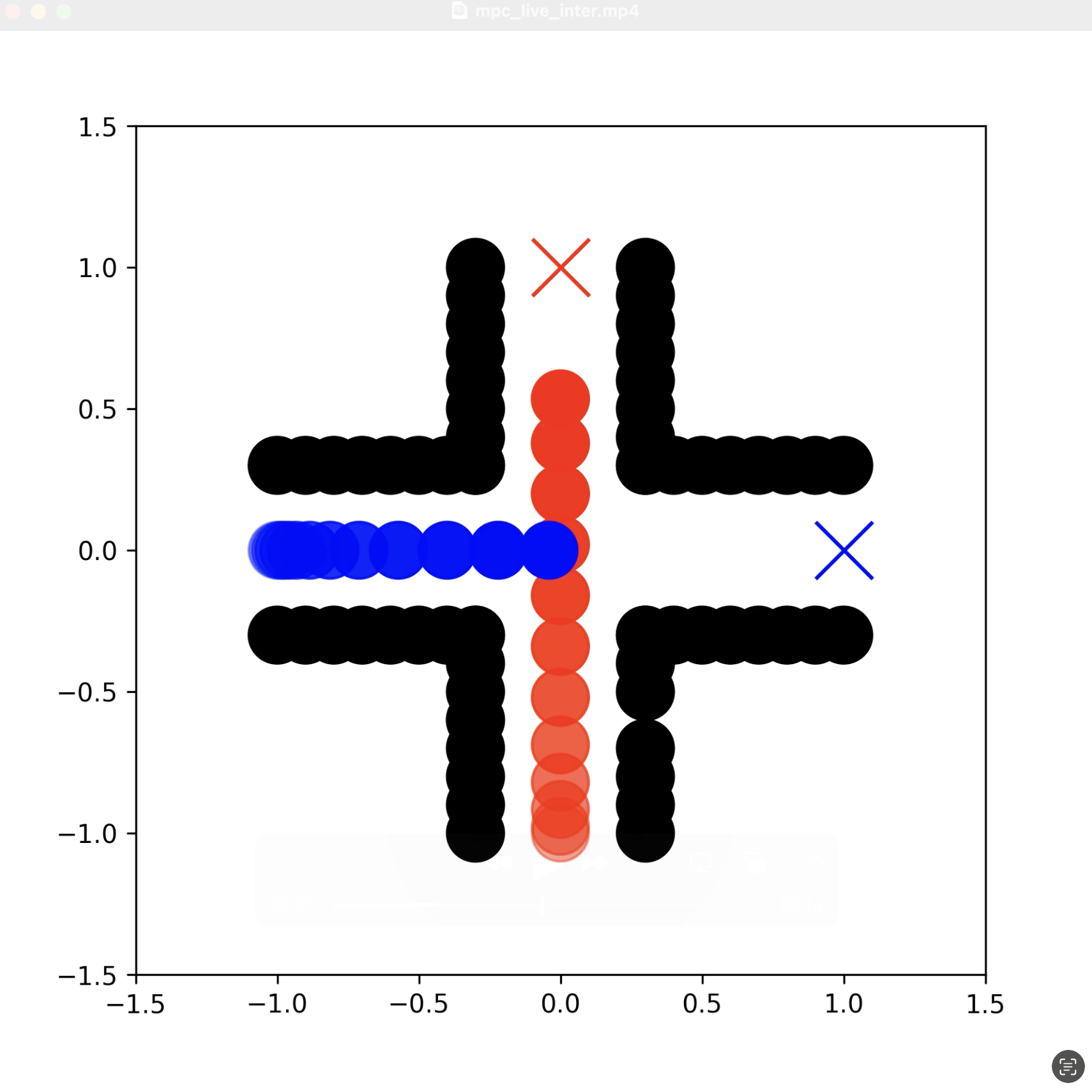}
}
\subfigure[{\footnotesize MACBF}]{%
    \label{fig: macbf_intersection_scenario}
    \includegraphics[width=0.15\linewidth]{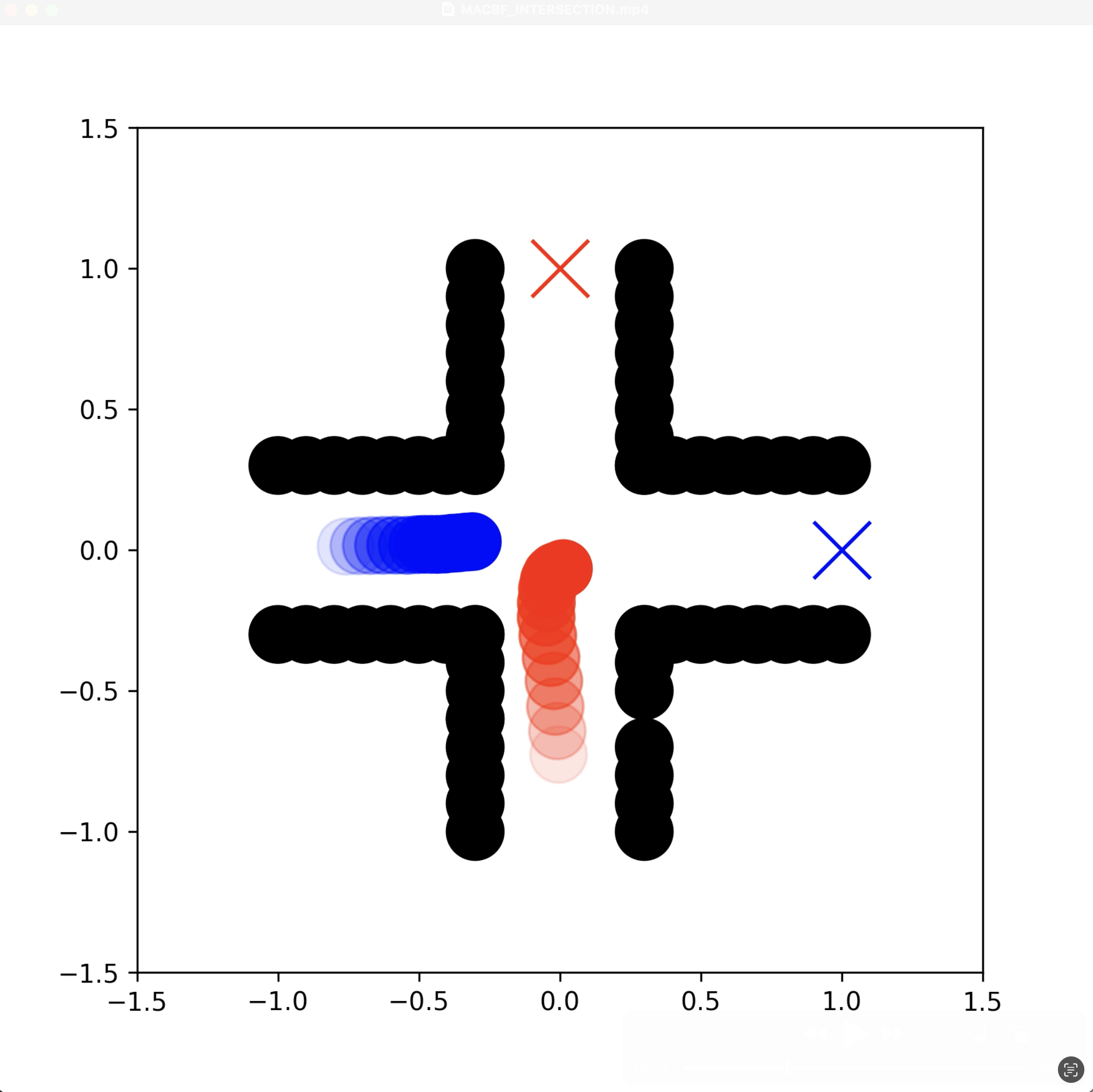}
}
\subfigure[{\footnotesize PIC}]{%
    \label{fig: pic_intersection_scenario}
    \includegraphics[width=0.15\linewidth]{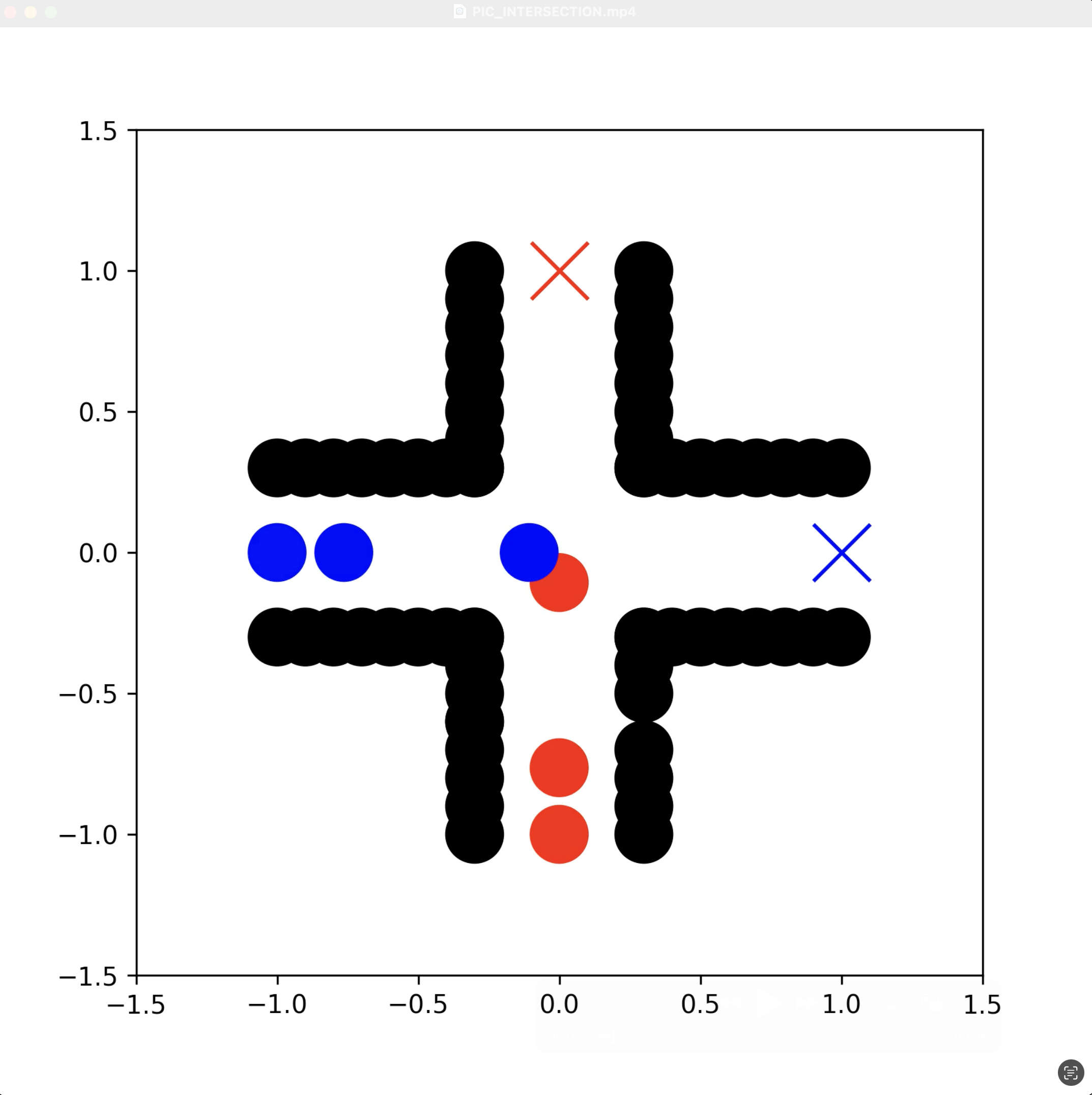}
}
\subfigure[{\footnotesize BarrierNet}]{%
    \label{fig: barriernet_intersection_scenario}
    \includegraphics[width=0.15\linewidth]{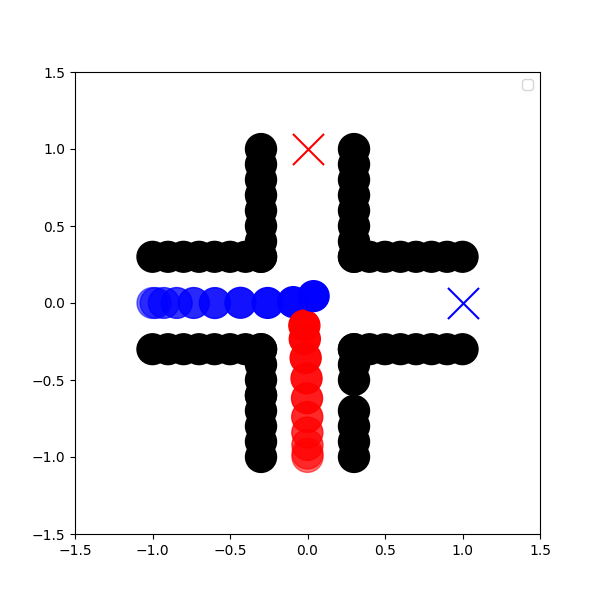}
}
\subfigure[{\footnotesize \modelname}]{%
    \label{fig: livenet_intersection_scenario}
    \includegraphics[width=0.15\linewidth]{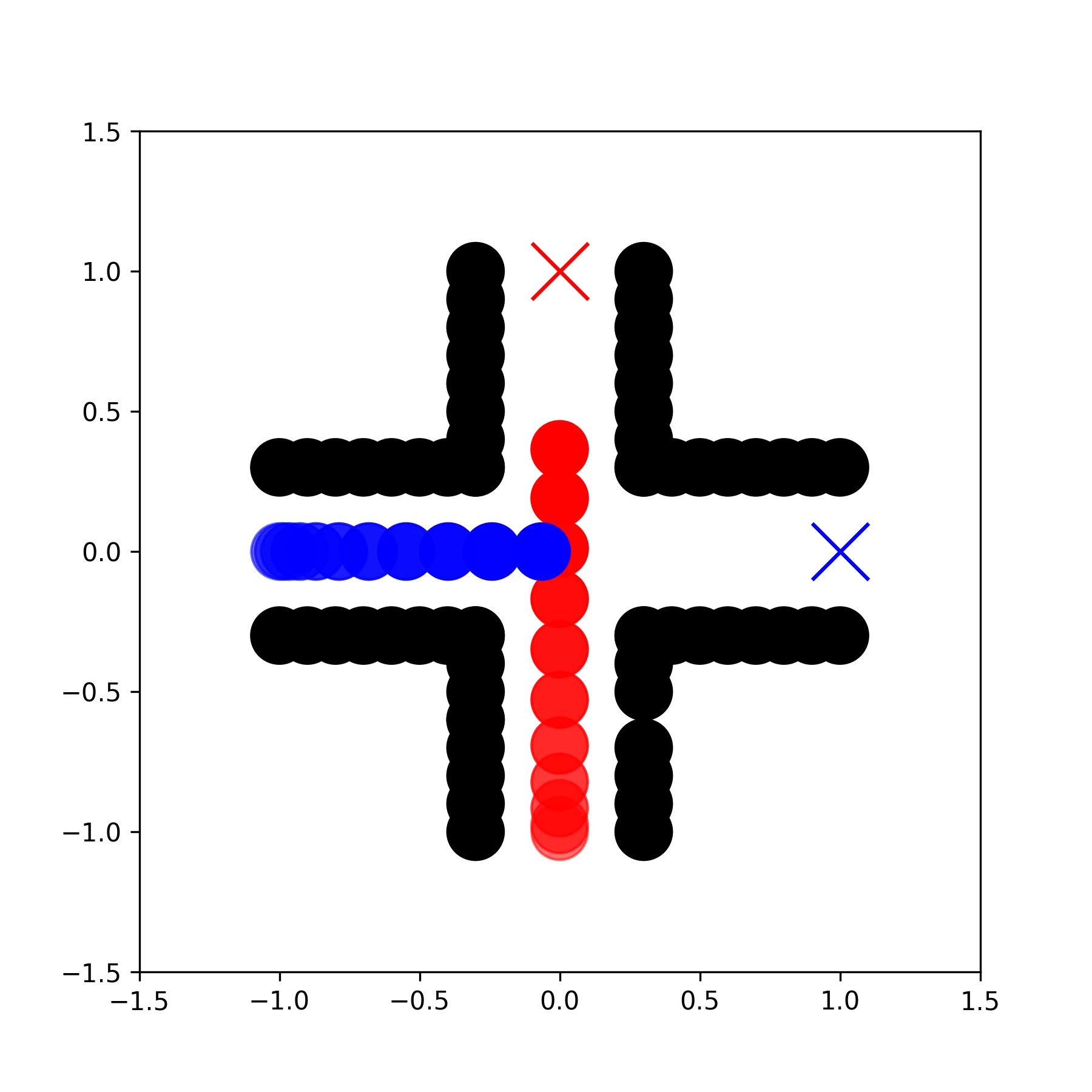}
}
\caption{Resulting trajectories in Doorway (Figures~\ref{fig: mpc_cbf_doorway_scenario}-\ref{fig: livenet_doorway_scenario}) and Intersection (Figures~\ref{fig: mpc_cbf_intersection_scenario}-\ref{fig: livenet_intersection_scenario})}
\label{fig: result_trajectories}
\end{figure}

\begin{figure}[t]
\subfigure[{\footnotesize Distance CBF $\left(p^o\right)$}]{%
    \label{fig: livenet_doorway_vel}
    \includegraphics[width=0.32\linewidth]{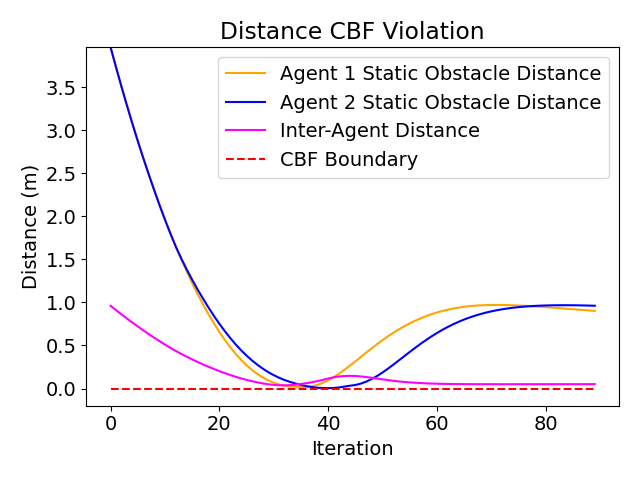}
}%
\subfigure[{\footnotesize Liveness CBF $\left(p^l\right)$}]{%
    \label{fig: livenet_doorway_vel}
    \includegraphics[width=0.32\linewidth]{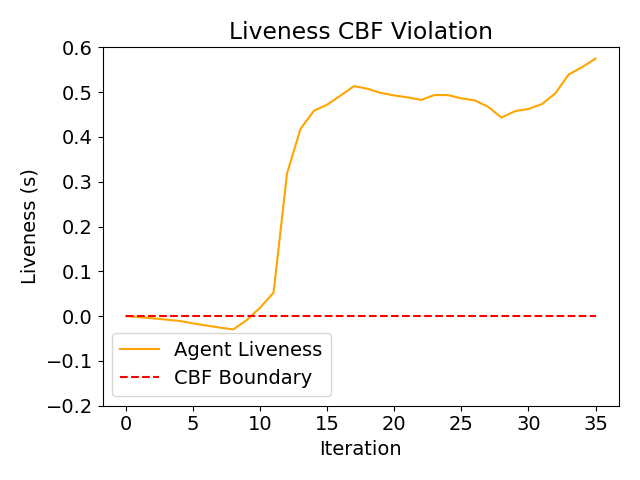}
}%
\subfigure[{\footnotesize Average $\Delta$ Path}]{%
    \label{fig: livenet_doorway_vel}
    \includegraphics[width=0.32\linewidth]{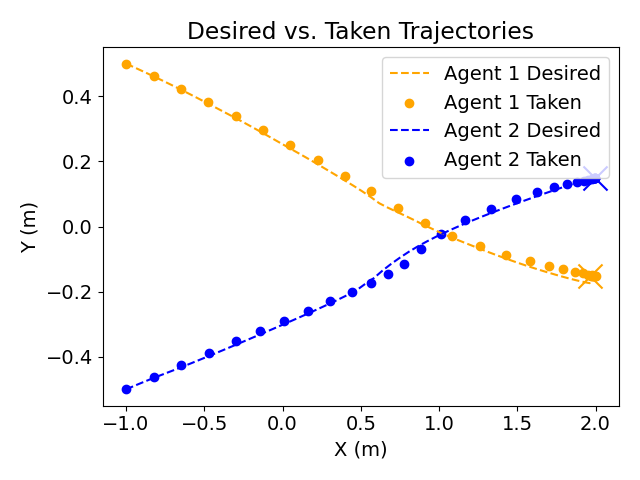}
}

\caption{\modelname's obstacle dCBF, liveness dCBF, and deviation from desired path in Doorway scenario.}
\label{fig: cbf_violations_and_desired_traj}
\end{figure}

Of the baselines, only \textsc{SMG-CBF} was successful due to its deadlock resolution capabilities. \textsc{SMG-CBF}, however, is $10\times$ slower than \modelname as it runs an iterative optimization algorithm every control cycle. Additionally, as shown in the Doorway scenario results, \textsc{SMG-CBF} was significantly more invasive, with an average velocity perturbation of $4$--$5\times$ that of \modelname. Another core limitation with \textsc{SMG-CBF} is its dependence on a \textit{liveness threshold} to determine when to apply the CBF (\cite{chandra2024deadlock}). As the controller enters and exits this threshold, its acceleration output varies significantly, thus producing jagged controls as shown in Figure~\ref{fig: smg_cbf_doorway_vel} between iterations $10$ and $40$. On the other hand, \modelname's ability to learn how much to relax the CBF as a function of the agent's state and observation allows the velocity profile of the resultant path to be much smoother as demonstrated by the smoother dip in agent $2$'s velocity in Figure~\ref{fig: livenet_doorway_vel}, which more closely mimics human-like behavior. Figure~\ref{fig: vel_profile_comparison} further shows evidence of better human-like yielding for \modelname. In particular, note that agent $2$ does not begin to yield until the last second (around iteration $20$) compared to SMG-CBF where agent $2$ begins to slow around iteration $10$, suggesting that \modelname results in less conservative, more agile navigation.
\begin{wrapfigure}{r}{0.6\linewidth}
\vspace{-10pt}
\centering
\subfigure[{\footnotesize \modelname}]{%
    \label{fig: livenet_doorway_vel}
    \includegraphics[width=0.47\linewidth]{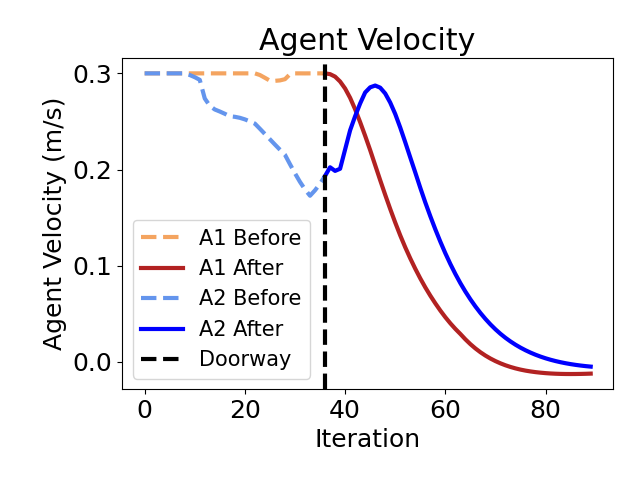}
} 
\subfigure[{\footnotesize SMG-CBF}]{%
    \label{fig: smg_cbf_doorway_vel}
    \includegraphics[width=0.47\linewidth]{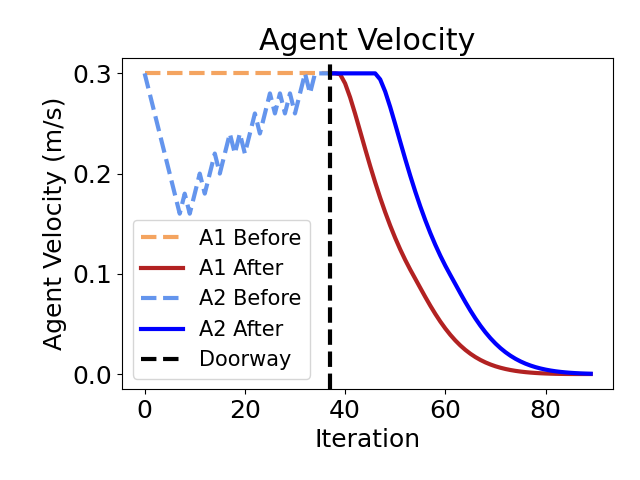}
} 
\caption{Comparing agents' (A1, A2) velocities generated by \modelname and \textsc{SMG-CBF}, before and after crossing the doorway.}
\label{fig: vel_profile_comparison} 
\vspace{-10pt}
\end{wrapfigure}

\modelname is also more robust to variations in the environment and agent configurations than\textsc{SMG-CBF}. We tested both on a suite of $28$ perturbed scenarios of the original doorway SMG without any changes to their parameter configuration. The perturbations were created by variations in the agents' initial position, initial heading, initial velocity, and goal position. The positions were altered on a scale of $0.5m$, initial headings facing the doorway and facing the wall were tested, and the initial velocity was either full speed ($0.3m/s$) or standstill ($0.0m/s$). \modelname was able to solve {25 / 28} scenarios without a deadlock or collision, whereas \textsc{SMG-CBF} was only able to solve {16 / 28}.

It should also be noted that due to \textsc{SMG-CBF}'s deterministic manner, it is unable to solve perfectly symmetrical cases without predefining which agent should start off moving faster. On the other hand, the same \modelname network could be used for both agents, as it has a slight inherent bias based on its starting and goal positions, allowing it to break the symmetry. \textsc{SMG-CBF}'s lack of robustness is due to constant parameters defining how strictly the CBF is followed for each scenario. On the other hand, \modelname's ability to predict the penalty values, $p(z)$, that define the relaxation of the CBF allows it to better adapt to a multitude of scenarios.

\section{Conclusion}
\label{sec: conclusion}

In this work, we presented \modelname, a robust, minimally-invasive neural network controller that uses differentiable CBF layers to tackle the safety and liveness challenges of constrained environments. Our navigation approach utilized the BarrierNet framework as the base neural network. We introduced novel differentiable CBF layers to provide liveness and 2D multi-agent navigation. To train the network, we hand-tuned an optimal recending-horizon controller over many perturbed scenarios to generate a large dataset. Our approach guarantees safe and live behavior given enough training data and iterations to learn the dCBF's corresponding penalty values. Experiments show that in practical scenarios the model outperforms existing solutions in safety, minimal invasiveness, compute speed, and robustness. The faster compute time and robustness are crucial when run on real robots in constrained areas that need to react quickly to unpredictable situations.

Our approach has some limitations. \modelname is currently tested in simulation, and we plan to deploy in the real world in the future. Additionally, \modelname has only been tested on 2-agent scenarios. We plan on investigating the scalability of this model in terms of the compute time and accuracy as the number of agents increases. Furthermore, an issue with imitation learning is a laborious data generation process, as we have to tune the optimal controller for each scenario perturbation. Utilizing unsupervised learning methods would allow for self-exploration of novel states instead of forcing the agent to only learn from states that the optimal controller explored.


\bibliography{refs}

\begin{thebibliography}{53}
\providecommand{\natexlab}[1]{#1}
\providecommand{\url}[1]{\texttt{#1}}
\expandafter\ifx\csname urlstyle\endcsname\relax
  \providecommand{\doi}[1]{doi: #1}\else
  \providecommand{\doi}{doi: \begingroup \urlstyle{rm}\Url}\fi

\bibitem[Amos and Kolter(2017)]{amos2017optnet}
Brandon Amos and J~Zico Kolter.
\newblock Optnet: Differentiable optimization as a layer in neural networks.
\newblock In \emph{International conference on machine learning}, pages 136--145. PMLR, 2017.

\bibitem[Andersson et~al.(2019)Andersson, Gillis, Horn, Rawlings, and Diehl]{andersson2019casadi}
Joel~AE Andersson, Joris Gillis, Greg Horn, James~B Rawlings, and Moritz Diehl.
\newblock Casadi: a software framework for nonlinear optimization and optimal control.
\newblock \emph{Mathematical Programming Computation}, 11:\penalty0 1--36, 2019.

\bibitem[Au et~al.(2015)Au, Zhang, and Stone]{au2015autonomous}
Tsz-Chiu Au, Shun Zhang, and Peter Stone.
\newblock Autonomous intersection management for semi-autonomous vehicles.
\newblock In \emph{Routledge Handbook of Transportation}, pages 88--104. Routledge, 2015.

\bibitem[Bogue(2024)]{bogue2024role}
Robert Bogue.
\newblock The role of robots in logistics.
\newblock \emph{Industrial Robot: the international journal of robotics research and application}, 51\penalty0 (3):\penalty0 381--386, 2024.

\bibitem[Carlino et~al.(2013)Carlino, Boyles, and Stone]{carlino2013auction}
Dustin Carlino, Stephen~D Boyles, and Peter Stone.
\newblock Auction-based autonomous intersection management.
\newblock In \emph{16th International IEEE Conference on Intelligent Transportation Systems (ITSC 2013)}, pages 529--534. IEEE, 2013.

\bibitem[Chandra and Manocha(2022)]{chandra2022gameplan}
Rohan Chandra and Dinesh Manocha.
\newblock Gameplan: Game-theoretic multi-agent planning with human drivers at intersections, roundabouts, and merging.
\newblock \emph{IEEE Robotics and Automation Letters}, 7\penalty0 (2):\penalty0 2676--2683, 2022.

\bibitem[Chandra et~al.(2022)Chandra, Wang, Schwager, and Manocha]{chandra2022game}
Rohan Chandra, Mingyu Wang, Mac Schwager, and Dinesh Manocha.
\newblock Game-theoretic planning for autonomous driving among risk-aware human drivers.
\newblock In \emph{2022 International Conference on Robotics and Automation (ICRA)}, pages 2876--2883. IEEE, 2022.

\bibitem[Chandra et~al.(2023)Chandra, Maligi, Anantula, and Biswas]{chandra2023socialmapf}
Rohan Chandra, Rahul Maligi, Arya Anantula, and Joydeep Biswas.
\newblock Socialmapf: Optimal and efficient multi-agent path finding with strategic agents for social navigation.
\newblock \emph{IEEE Robotics and Automation Letters}, 8\penalty0 (6):\penalty0 3214--3221, 2023.

\bibitem[Chandra et~al.(2024{\natexlab{a}})Chandra, Karnan, Mehr, Stone, and Biswas]{chandra2024towards}
Rohan Chandra, Haresh Karnan, Negar Mehr, Peter Stone, and Joydeep Biswas.
\newblock Towards imitation learning in real world unstructured social mini-games in pedestrian crowds.
\newblock \emph{arXiv preprint arXiv:2405.16439}, 2024{\natexlab{a}}.

\bibitem[Chandra et~al.(2024{\natexlab{b}})Chandra, Sprague, and Biswas]{chandra2024socialgym}
Rohan Chandra, Zayne Sprague, and Joydeep Biswas.
\newblock Socialgym 2.0: Simulator for multi-robot learning and navigation in shared human spaces.
\newblock In \emph{Proceedings of the AAAI Conference on Artificial Intelligence}, volume~38, pages 23778--23780, 2024{\natexlab{b}}.

\bibitem[Chandra et~al.(2024{\natexlab{c}})Chandra, Zinage, Bakolas, Stone, and Biswas]{chandra2024deadlock}
Rohan Chandra, Vrushabh Zinage, Efstathios Bakolas, Peter Stone, and Joydeep Biswas.
\newblock Deadlock-free, safe, and decentralized multi-robot navigation in social mini-games via discrete-time control barrier functions.
\newblock 2024{\natexlab{c}}.

\bibitem[Chen et~al.(2023)Chen, Wang, Guo, and Li]{chen2023multi}
Yuda Chen, Chenghan Wang, Meng Guo, and Zhongkui Li.
\newblock Multi-robot trajectory planning with feasibility guarantee and deadlock resolution: An obstacle-dense environment.
\newblock \emph{IEEE Robotics and Automation Letters}, 8\penalty0 (4):\penalty0 2197--2204, 2023.

\bibitem[Daftry et~al.(2017)Daftry, Bagnell, and Hebert]{daftry2017learning}
Shreyansh Daftry, J~Andrew Bagnell, and Martial Hebert.
\newblock Learning transferable policies for monocular reactive mav control.
\newblock In \emph{2016 International Symposium on Experimental Robotics}, pages 3--11. Springer, 2017.

\bibitem[Di and Lamperski(2018)]{di2018differential}
Bolei Di and Andrew Lamperski.
\newblock Differential dynamic programming for nonlinear dynamic games.
\newblock \emph{arXiv preprint arXiv:1809.08302}, 2018.

\bibitem[Francis et~al.(2023)Francis, P{\'e}rez-d'Arpino, Li, Xia, Alahi, Alami, Bera, Biswas, Biswas, Chandra, et~al.]{francis2023principles}
Anthony Francis, Claudia P{\'e}rez-d'Arpino, Chengshu Li, Fei Xia, Alexandre Alahi, Rachid Alami, Aniket Bera, Abhijat Biswas, Joydeep Biswas, Rohan Chandra, et~al.
\newblock Principles and guidelines for evaluating social robot navigation algorithms.
\newblock \emph{arXiv preprint arXiv:2306.16740}, 2023.

\bibitem[Fridovich-Keil et~al.(2020)Fridovich-Keil, Ratner, Peters, Dragan, and Tomlin]{fridovich2020efficient}
David Fridovich-Keil, Ellis Ratner, Lasse Peters, Anca~D Dragan, and Claire~J Tomlin.
\newblock Efficient iterative linear-quadratic approximations for nonlinear multi-player general-sum differential games.
\newblock In \emph{2020 IEEE international conference on robotics and automation (ICRA)}, pages 1475--1481. IEEE, 2020.

\bibitem[Garg et~al.(2024)Garg, Zhang, Arkin, and Fan]{garg2024foundationmodelsrescuedeadlock}
Kunal Garg, Songyuan Zhang, Jacob Arkin, and Chuchu Fan.
\newblock Foundation models to the rescue: Deadlock resolution in connected multi-robot systems, 2024.
\newblock URL \url{https://arxiv.org/abs/2404.06413}.

\bibitem[Guo et~al.(2024)Guo, Xie, Liu, Dai, Jiang, Guo, and Xie]{guo2024spatio}
Jianhua Guo, Zhihao Xie, Ming Liu, Zhiyuan Dai, Yu~Jiang, Jinqiu Guo, and Dong Xie.
\newblock Spatio-temporal joint optimization-based trajectory planning method for autonomous vehicles in complex urban environments.
\newblock \emph{Sensors}, 24\penalty0 (14):\penalty0 4685, 2024.

\bibitem[Hansen et~al.(2004)Hansen, Bernstein, and Zilberstein]{hansen2004dynamic}
Eric~A Hansen, Daniel~S Bernstein, and Shlomo Zilberstein.
\newblock Dynamic programming for partially observable stochastic games.
\newblock In \emph{AAAI}, volume~4, pages 709--715, 2004.

\bibitem[Hussein et~al.(2018)Hussein, Elyan, Gaber, and Jayne]{hussein2018deep}
Ahmed Hussein, Eyad Elyan, Mohamed~Medhat Gaber, and Chrisina Jayne.
\newblock Deep imitation learning for 3d navigation tasks.
\newblock \emph{Neural computing and applications}, 29:\penalty0 389--404, 2018.

\bibitem[Liu et~al.(2020)Liu, Yeh, and Schwing]{liu2020pic}
Iou-Jen Liu, Raymond~A Yeh, and Alexander~G Schwing.
\newblock Pic: permutation invariant critic for multi-agent deep reinforcement learning.
\newblock In \emph{Conference on Robot Learning}, pages 590--602. PMLR, 2020.

\bibitem[Lucia et~al.(2017)Lucia, Tătulea-Codrean, Schoppmeyer, and Engell]{LUCIA201751}
Sergio Lucia, Alexandru Tătulea-Codrean, Christian Schoppmeyer, and Sebastian Engell.
\newblock Rapid development of modular and sustainable nonlinear model predictive control solutions.
\newblock \emph{Control Engineering Practice}, 60:\penalty0 51--62, 2017.
\newblock ISSN 0967-0661.
\newblock \doi{https://doi.org/10.1016/j.conengprac.2016.12.009}.
\newblock URL \url{https://www.sciencedirect.com/science/article/pii/S0967066116302970}.

\bibitem[Ma(2022)]{ma2022graph}
Hang Ma.
\newblock Graph-based multi-robot path finding and planning.
\newblock \emph{Current Robotics Reports}, 3\penalty0 (3):\penalty0 77--84, 2022.

\bibitem[Mao et~al.(2024)Mao, Spasojevic, Hopkins, Hsieh, and Kumar]{mao2024collision}
Katherine Mao, Igor Spasojevic, Malakhi Hopkins, M~Ani Hsieh, and Vijay Kumar.
\newblock Collision-free time-optimal path parameterization for multi-robot teams.
\newblock \emph{arXiv preprint arXiv:2409.17079}, 2024.

\bibitem[Martinez-Gil et~al.(2012)Martinez-Gil, Lozano, and Fern{\'a}ndez]{martinez2012multi}
Francisco Martinez-Gil, Miguel Lozano, and Fernando Fern{\'a}ndez.
\newblock Multi-agent reinforcement learning for simulating pedestrian navigation.
\newblock In \emph{Adaptive and Learning Agents: International Workshop, ALA 2011, Held at AAMAS 2011, Taipei, Taiwan, May 2, 2011, Revised Selected Papers}, pages 54--69. Springer, 2012.

\bibitem[McNaughton et~al.(2011)McNaughton, Urmson, Dolan, and Lee]{mcnaughton2011motion}
Matthew McNaughton, Chris Urmson, John~M Dolan, and Jin-Woo Lee.
\newblock Motion planning for autonomous driving with a conformal spatiotemporal lattice.
\newblock In \emph{2011 IEEE International Conference on Robotics and Automation}, pages 4889--4895. IEEE, 2011.

\bibitem[Mehr et~al.(2023)Mehr, Wang, Bhatt, and Schwager]{mehr2023maximum}
Negar Mehr, Mingyu Wang, Maulik Bhatt, and Mac Schwager.
\newblock Maximum-entropy multi-agent dynamic games: Forward and inverse solutions.
\newblock \emph{IEEE transactions on robotics}, 39\penalty0 (3):\penalty0 1801--1815, 2023.

\bibitem[Mestres et~al.(2024)Mestres, Nieto-Granda, and Cort{\'e}s]{mestres2024distributed}
Pol Mestres, Carlos Nieto-Granda, and Jorge Cort{\'e}s.
\newblock Distributed safe navigation of multi-agent systems using control barrier function-based controllers.
\newblock \emph{IEEE Robotics and Automation Letters}, 2024.

\bibitem[Morimoto and Atkeson(2003)]{morimoto2003minimax}
Jun Morimoto and Christopher~G Atkeson.
\newblock Minimax differential dynamic programming: An application to robust biped walking.
\newblock In \emph{Advances in neural information processing systems}, pages 1563--1570, 2003.

\bibitem[Mylvaganam et~al.(2017)Mylvaganam, Sassano, and Astolfi]{GameApproachMultiAgent2017}
Thulasi Mylvaganam, Mario Sassano, and Alessandro Astolfi.
\newblock A differential game approach to multi-agent collision avoidance.
\newblock \emph{IEEE Transactions on Automatic Control}, 62\penalty0 (8):\penalty0 4229--4235, 2017.
\newblock \doi{10.1109/TAC.2017.2680602}.

\bibitem[Qin et~al.(2021)Qin, Zhang, Chen, Chen, and Fan]{macbf2021}
Zengyi Qin, Kaiqing Zhang, Yuxiao Chen, Jingkai Chen, and Chuchu Fan.
\newblock Learning safe multi-agent control with decentralized neural barrier certificates.
\newblock \emph{CoRR}, abs/2101.05436, 2021.
\newblock URL \url{https://arxiv.org/abs/2101.05436}.

\bibitem[Raj et~al.(2024)Raj, Hu, Karnan, Chandra, Payandeh, Mao, Stone, Biswas, and Xiao]{raj2024rethinking}
Amir~Hossain Raj, Zichao Hu, Haresh Karnan, Rohan Chandra, Amirreza Payandeh, Luisa Mao, Peter Stone, Joydeep Biswas, and Xuesu Xiao.
\newblock Rethinking social robot navigation: Leveraging the best of two worlds.
\newblock In \emph{2024 IEEE International Conference on Robotics and Automation (ICRA). IEEE}, 2024.

\bibitem[Rasheed et~al.(2022)Rasheed, Abdullah, and Al-Araji]{rasheed2022review}
Ammar Abdul~Ameer Rasheed, Mohammed~Najm Abdullah, and Ahmed~Sabah Al-Araji.
\newblock A review of multi-agent mobile robot systems applications.
\newblock \emph{International Journal of Electrical and Computer Engineering}, 12\penalty0 (4):\penalty0 3517--3529, 2022.

\bibitem[Schwarting et~al.(2021)Schwarting, Pierson, Karaman, and Rus]{schwarting2021stochastic}
Wilko Schwarting, Alexander Pierson, Sertac Karaman, and Daniela Rus.
\newblock Stochastic dynamic games in belief space.
\newblock \emph{IEEE Transactions on Robotics}, pages 1--16, 2021.

\bibitem[Sharon et~al.(2013)Sharon, Stern, Goldenberg, and Felner]{sharon2013increasing}
Guni Sharon, Roni Stern, Meir Goldenberg, and Ariel Felner.
\newblock The increasing cost tree search for optimal multi-agent pathfinding.
\newblock \emph{Artificial intelligence}, 195:\penalty0 470--495, 2013.

\bibitem[Sharon et~al.(2015)Sharon, Stern, Felner, and Sturtevant]{sharon2015conflict}
Guni Sharon, Roni Stern, Ariel Felner, and Nathan~R Sturtevant.
\newblock Conflict-based search for optimal multi-agent pathfinding.
\newblock \emph{Artificial intelligence}, 219:\penalty0 40--66, 2015.

\bibitem[Stern(2019)]{stern2019multi}
Roni Stern.
\newblock Multi-agent path finding--an overview.
\newblock \emph{Artificial Intelligence: 5th RAAI Summer School, Dolgoprudny, Russia, July 4--7, 2019, Tutorial Lectures}, pages 96--115, 2019.

\bibitem[Sun et~al.(2015)Sun, Theodorou, and Tsiotras]{sun2015game}
Wenliang Sun, Evangelos~A Theodorou, and Panagiotis Tsiotras.
\newblock Game theoretic continuous time differential dynamic programming.
\newblock In \emph{2015 American Control Conference (ACC)}, pages 5593--5598. IEEE, 2015.

\bibitem[Sun et~al.(2016)Sun, Theodorou, and Tsiotras]{sun2016stochastic}
Wenliang Sun, Evangelos~A Theodorou, and Panagiotis Tsiotras.
\newblock Stochastic game theoretic trajectory optimization in continuous time.
\newblock In \emph{2016 IEEE 55th Conference on Decision and Control (CDC)}, pages 6167--6172. IEEE, 2016.

\bibitem[Suriyarachchi et~al.(2022)Suriyarachchi, Chandra, Baras, and Manocha]{suriyarachchi2022gameopt}
Nilesh Suriyarachchi, Rohan Chandra, John~S Baras, and Dinesh Manocha.
\newblock Gameopt: Optimal real-time multi-agent planning and control at dynamic intersections.
\newblock In \emph{2022 IEEE 25th International Conference on Intelligent Transportation Systems (ITSC)}, pages 2599--2606. IEEE doi - 10.1109/ITSC55140.2022.9921968, 2022.

\bibitem[Suriyarachchi et~al.(2024)Suriyarachchi, Chandra, Anantula, Baras, and Manocha]{suriyarachchi2024gameopt+}
Nilesh Suriyarachchi, Rohan Chandra, Arya Anantula, John~S Baras, and Dinesh Manocha.
\newblock Gameopt+: Improving fuel efficiency in unregulated heterogeneous traffic intersections via optimal multi-agent cooperative control.
\newblock \emph{arXiv preprint arXiv:2405.16430}, 2024.

\bibitem[Wang et~al.(2017)Wang, Ames, and Egerstedt]{wang2017safety}
Li~Wang, Aaron~D Ames, and Magnus Egerstedt.
\newblock Safety barrier certificates for collisions-free multirobot systems.
\newblock \emph{IEEE Transactions on Robotics}, 33\penalty0 (3):\penalty0 661--674, 2017.

\bibitem[Wang et~al.(2021{\natexlab{a}})Wang, Wang, Talbot, Gerdes, and Schwager]{GameTheorySelfDriving}
Mingyu Wang, Zijian Wang, John Talbot, J.~Christian Gerdes, and Mac Schwager.
\newblock Game-theoretic planning for self-driving cars in multivehicle competitive scenarios.
\newblock \emph{IEEE Transactions on Robotics}, 37\penalty0 (4):\penalty0 1313--1325, 2021{\natexlab{a}}.
\newblock \doi{10.1109/TRO.2020.3047521}.

\bibitem[Wang et~al.(2021{\natexlab{b}})Wang, Wang, Talbot, Gerdes, and Schwager]{wang2021game}
Mingyu Wang, Zijian Wang, John Talbot, J~Christian Gerdes, and Mac Schwager.
\newblock Game-theoretic planning for self-driving cars in multivehicle competitive scenarios.
\newblock \emph{IEEE Transactions on Robotics}, 37\penalty0 (4):\penalty0 1313--1325, 2021{\natexlab{b}}.

\bibitem[Wu et~al.(2023)Wu, Chandra, Guan, Bedi, and Manocha]{wu2023intent}
Xiyang Wu, Rohan Chandra, Tianrui Guan, Amrit Bedi, and Dinesh Manocha.
\newblock Intent-aware planning in heterogeneous traffic via distributed multi-agent reinforcement learning.
\newblock In \emph{Conference on Robot Learning}, pages 446--477. PMLR, 2023.

\bibitem[Xiao and Belta(2019)]{xiao2019control}
Wei Xiao and Calin Belta.
\newblock Control barrier functions for systems with high relative degree.
\newblock In \emph{2019 IEEE 58th conference on decision and control (CDC)}, pages 474--479. IEEE, 2019.

\bibitem[Xiao et~al.(2023)Xiao, Wang, Hasani, Chahine, Amini, Li, and Rus]{barriernet2021}
Wei Xiao, Tsun-Hsuan Wang, Ramin Hasani, Makram Chahine, Alexander Amini, Xiao Li, and Daniela Rus.
\newblock Barriernet: Differentiable control barrier functions for learning of safe robot control.
\newblock \emph{IEEE Transactions on Robotics}, 39\penalty0 (3):\penalty0 2289--2307, 2023.

\bibitem[Yan et~al.(2022)Yan, Qin, Liu, Ma, and Kang]{yan2022mapless}
Chengzhen Yan, Jiahu Qin, Qingchen Liu, Qichao Ma, and Yu~Kang.
\newblock Mapless navigation with safety-enhanced imitation learning.
\newblock \emph{IEEE Transactions on Industrial Electronics}, 70\penalty0 (7):\penalty0 7073--7081, 2022.

\bibitem[Zeng et~al.(2021)Zeng, Zhang, and Sreenath]{zeng2021safety}
Jun Zeng, Bike Zhang, and Koushil Sreenath.
\newblock Safety-critical model predictive control with discrete-time control barrier function.
\newblock In \emph{2021 American Control Conference (ACC)}, pages 3882--3889. IEEE, 2021.

\bibitem[Zhou et~al.(2017)Zhou, Wang, Bandyopadhyay, and Schwager]{zhou2017fast}
Dingjiang Zhou, Zijian Wang, Saptarshi Bandyopadhyay, and Mac Schwager.
\newblock Fast, on-line collision avoidance for dynamic vehicles using buffered voronoi cells.
\newblock \emph{IEEE Robotics and Automation Letters}, 2\penalty0 (2):\penalty0 1047--1054, 2017.

\bibitem[Zhu et~al.(2020)Zhu, St{\"u}rz, Rosolia, and Borrelli]{zhu2020trajectory}
Edward~L Zhu, Yvonne~R St{\"u}rz, Ugo Rosolia, and Francesco Borrelli.
\newblock Trajectory optimization for nonlinear multi-agent systems using decentralized learning model predictive control.
\newblock In \emph{2020 59th IEEE Conference on Decision and Control (CDC)}, pages 6198--6203. IEEE, 2020.

\bibitem[Zhu et~al.(2022)Zhu, Brito, and Alonso{-}Mora]{zhu2022}
Hai Zhu, Bruno Brito, and Javier Alonso{-}Mora.
\newblock Decentralized probabilistic multi-robot collision avoidance using buffered uncertainty-aware voronoi cells.
\newblock \emph{CoRR}, abs/2201.04012, 2022.
\newblock URL \url{https://arxiv.org/abs/2201.04012}.

\bibitem[Zinage et~al.(2024)Zinage, Jha, Chandra, and Bakolas]{zinage2024decentralized}
Vrushabh Zinage, Abhishek Jha, Rohan Chandra, and Efstathios Bakolas.
\newblock Decentralized safe and scalable multi-agent control under limited actuation.
\newblock \emph{arXiv preprint arXiv:2409.09573}, 2024.

\end{thebibliography}

\end{document}